\documentclass[twocolumn]{entalpic}

\usepackage{amsmath}
\usepackage{amssymb}
\usepackage{mathtools}
\usepackage{amsthm}
\usepackage{natbib}
\usepackage{adjustbox}
\usepackage{enumitem}
\usepackage{pifont}
\usepackage{tikz}
\usetikzlibrary{shapes, arrows.meta, positioning, calc}
\usepackage{algorithm}
\usepackage{algorithmic}

\newcommand{\cmark}{\ding{51}}
\newcommand{\xmark}{\ding{55}}
\theoremstyle{plain}
\newtheorem{theorem}{Theorem}[section]
\newtheorem{proposition}[theorem]{Proposition}
\newtheorem{lemma}[theorem]{Lemma}

\theoremstyle{definition}

\theoremstyle{remark}

\makeatletter
\renewcommand{\title}[1]{\gdef\@title{#1}\newcommand{\titlelist}{%
  {\LARGE\bfseries\rmfamily\raggedright\linespread{1.1}\selectfont #1\par}%
  \vspace{0.4cm}%
}}
\renewcommand\authorformat[2][]{{\sffamily\bfseries\mbox{#2$^{#1}$}}}
\renewcommand\metadataformat[2][]{\ifstrempty{#1}{{\small #2}}{{\small {\sffamily \bfseries #1:} #2}}}
\makeatother

\title{TriForces: Augmenting Atomistic GNNs for Transferable Representations}

\author[1,2,*]{Ali Ramlaoui}
\author[1]{Alexandre Duval}
\author[1]{Hannah Bull}
\author[1]{Victor Schmidt}
\author[2]{Hugues Talbot}
\author[2,*]{\linebreak Fragkiskos D. Malliaros}
\author[1,*]{Joseph Musielewicz}

\affiliation[1]{Entalpic, Paris, France}
\affiliation[2]{Université Paris-Saclay, CentraleSupélec, Inria, Gif-sur-Yvette, France}
\contribution[*]{Corresponding authors}
\metadata[Correspondence]{\email{ali.ramlaoui@entalpic.ai}, \email{joseph.musielewicz@entalpic.ai}\\\email{fragkiskos.malliaros@centralesupelec.fr}}
\metadata[]{Accepted at ICML 2026}

\abstract{
Machine learning interatomic potentials (MLIPs) achieve excellent accuracy when trained on large Density Functional Theory (DFT) data. To be useful in practice, they must often be adapted to target chemistries using small and expensive task-specific datasets. However, MLIPs transfer inconsistently across domains, with representations that often loose accessible composition and structure information. To address this, we present TriForces, a model-agnostic three-stream framework that separates composition and structure information, combined with self-supervised learning to preserve transferable representations.
TriForces improves performance on MatBench and QM9 over baselines without needing DFT labels and enables efficient similar structure retrieval through its learned latent space. On OMat24, in limited-data training regime, TriForces reduces energy MAE by 57\% at 20K samples only and improves force MAE across sample sizes. We release pretrained TriForces variants across multiple MLIP architectures with code at \url{https://github.com/Ramlaoui/triforces}.
}

\begin{document}

\maketitle

\section{Introduction}
\label{sec:introduction}

Density functional theory (DFT)~\cite{kohn1965self} provides a computational model to obtain atomistic properties like energies and forces, but remains expensive, typically scaling as $\mathcal{O}(n^3)$ with the number of electrons~\cite{burke2012perspective}. Fueled by large-scale DFT datasets~\cite{jain2020materials, schmidt2024improving, ramlaoui2025lemat, chanussot2020open}, geometric Graph Neural Networks (GNNs)~\cite{duval2023hitchhiker} now achieve remarkable accuracy on property prediction tasks. Such Machine Learning Interatomic Potentials (MLIPs) can drive simulations and materials discovery workflows~\cite{batatia2025foundation, friederich2021machine, zeni2025generative} through geometry optimization, molecular dynamics, and exploration of reaction pathways~\cite{wander2024cattsunami}.

However, the diversity and complexity of chemical systems require frequent adaptation of models to new properties, systems and DFT functionals, often using small costly task-specific datasets. Retraining from scratch is computationally expensive and does not transfer learned representations, while fine-tuning involves many parameters and often leads to catastrophic forgetting~\cite{huang2025cross}. We find that MLIPs pre-trained on 100M structures can fail to fine-tune even on simple diagnostic tasks, such as identifying the crystal system or predicting the majority element in a structure's composition (App.~\ref{app:probing}).
Transfer performance also varies substantially across tasks and benchmarks~\cite{niblett2025transferability}, and practical choices, such as which layers to fine-tune~\cite{pinto2025unlocking}, how much data to use, or which pretrained model to start from, can significantly affect outcomes~\cite{focassio2024performance}.

A second limitation points to the same underlying issue: current MLIPs learn representations optimized for prediction rather than for reuse. In other domains, self-supervised learning (SSL) has shown that representations can preserve semantic structure enabling nearest-neighbour retrieval in embedding space for exploratory analysis~\cite{caron2021emerging, chen2020simple, he2020momentum, he2022masked}. Such retrieval also offers practical advantage in materials science for comparing candidates and exploring structure-property relationships~\cite{hu2019strategies, stark20223d, yang2022neural}. By contrast, training to regress energies and forces encourages representations sufficient for those targets but not organized to preserve composition and geometry. As a result, embeddings support retrieval poorly, and local neighbourhoods can be unstable across models or domains~\cite{li2025platonic}.
We attribute both transfer instability and weak reuse to supervised objectives and standard architectures that entangle composition and geometry.

We introduce TriForces, a framework that makes atomistic representations easier to transfer by splitting them into three components: composition, structure, and interaction (Fig.~\ref{fig:triforces}). Instead of adding auxiliary features to a single latent vector, the composition stream encodes chemistry without coordinates, the structure stream encodes geometry without element identity, and a base geometric GNN captures their coupling. This factorization enables self-supervised objectives to play complementary roles: denoising can now emphasize geometric stability, masking can encourage learning compositional patterns, and non-reconstruction objectives (e.g., LeJEPA~\cite{balestriero2025lejepa0}, Barlow Twins~\cite{zbontar2021barlow}) improve latent separability. Beyond making fine-tuning easier and more efficient, the resulting embeddings support decomposed similarity search by chemistry, structure, or jointly.

We demonstrate TriForces on base geometric GNN architectures spanning equivariant (MACE~\cite{batatia2022mace0}, eSEN~\cite{pmlr-v267-fu25h} and non-equivariant (Orb-v3~\cite{rhodes2025orb0v30}) designs. Our contributions are:
\begin{enumerate}[noitemsep,topsep=0pt]
    \item A three-stream decomposition augmenting existing atomistic architectures with compositional and structural pathways, preserving both factors by design.
    \item A self-supervised pretraining strategy that combines reconstruction-based objectives with latent-prediction learning via LeJEPA to structure the embedding space and improve downstream transfer.
    \item We enable interpretable similarity retrieval by performing search in the compositional, structural, or joint embedding spaces, supporting materials comparison and exploratory analysis.
    \item Extensive experimental validation across multiple architectures and benchmarks, demonstrating improved data efficiency, transfer performance, and representation quality.
\end{enumerate}
Our results show that stream factorization provides the dominant gains in large supervised settings, while SSL is most valuable for low-data transfer, representation organization, and retrieval. We therefore view TriForces as an architectural framework whose representations are further improved by SSL, rather than as an SSL method alone.
Code, training recipes, and pretrained TriForces checkpoints are available at \url{https://github.com/Ramlaoui/triforces}.

\section{Related Work}
\label{sec:related_work}

\textbf{Geometric models for atomistic systems.}
Geometric deep learning has become the dominant paradigm for molecular and materials modeling. Equivariant architectures such as SchNet~\cite{schütt2017schnet}, DimeNet~\cite{gasteiger2020fast}, MACE~\cite{batatia2022mace0}, and EquiformerV2~\cite{DBLP:conf/iclr/LiaoWDS24} incorporate physical symmetries directly into their design, while non-equivariant approaches, including Orb~\cite{neumann2024orb} and FAENet~\cite{duval2023faenet}, learn invariances from data through augmentation. These models achieve strong in-distribution accuracy and have enabled large-scale atomistic simulation and discovery workflows. However, architectural advances alone primarily improve predictive performance on the training task and do not explicitly encourage representations that are reusable across tasks and domains~\cite{niblett2025transferability, pinto2025unlocking}. Consequently, a growing literature explores fine‑tuning strategies for better adaptation~\cite{zhang2026multi0task,kong2025mattertune0}, yet questions remain about reuse in analysis settings such as retrieval and similarity search.

\begin{figure}[!t]
    \centering
    \includegraphics[width=.8\linewidth]{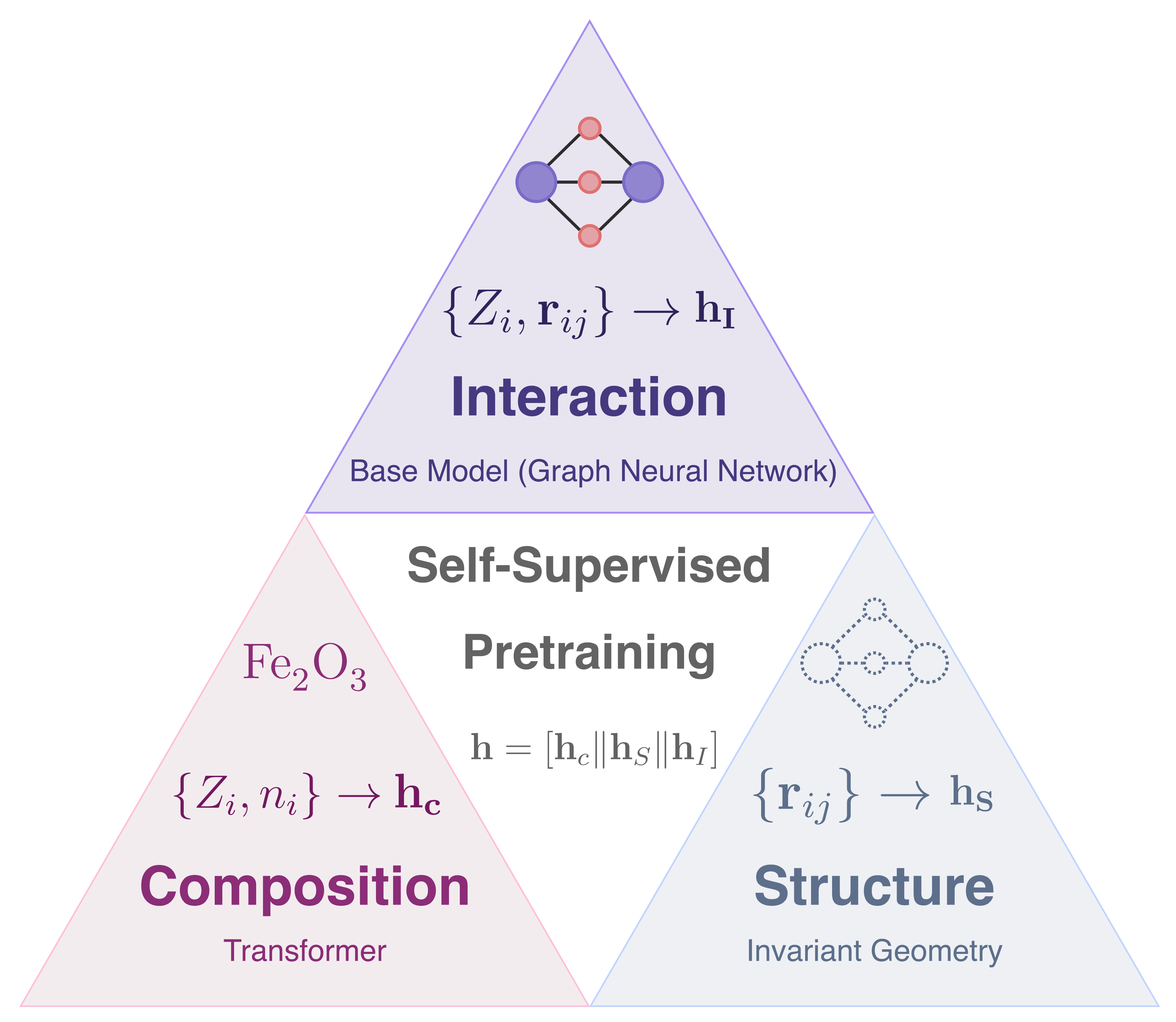}
    \caption{TriForces builds upon existing geometric GNNs by adding compositional and structural streams, as well as multi-objective self-supervised pretraining.}
    \label{fig:triforces}
\end{figure}

\textbf{SSL for molecules and materials.}
SSL has been explored for atomistic systems using a variety of objectives, including denoising~\cite{zaidi2022pre0training, neumann2024orb}, masked reconstruction~\cite{dong2025understanding}, and representation-level learning~\cite{magar2022crystal, koker2022graph, zhou2023uni}. Denoising has also been used as an auxiliary objective alongside supervised training or as equilibrium-structure pretraining, notably in Noisy Nodes and extensions to non-equilibrium structures via force-aware denoising~\cite{godwin2021simple, DBLP:journals/tmlr/LiaoSSD24}. Prior work has shown that SSL can improve data efficiency and fine-tuning performance on downstream tasks~\cite{zaidi2022pre0training}. However, these methods are typically evaluated in isolation and primarily through predictive accuracy, leaving open questions about how different SSL objective families affect representation organization and reuse. In particular, little attention has been paid to how SSL interacts with architectural inductive biases or whether learned representations support tasks beyond prediction, such as retrieval or exploratory analysis.

\textbf{Compositional representations.}
A parallel line of work models materials using composition alone, without explicit structural information. Models such as Roost~\cite{goodall2020predicting} and CrabNet~\cite{wang2021compositionally} demonstrate that stoichiometry encodes strong global signals and can achieve competitive performance across a range of materials property prediction tasks. These results highlight the richness of compositional information, but also underscore its limitations: composition alone cannot capture geometric effects or distinguish polymorphs.

Building upon existing geometric models with strong performance on predictive tasks, the TriForces architecture includes explicit encoders for compositional and structural representations. Self-supervised training integrates denoising, masking, and representation-level objectives that target complementary aspects of the learned representations. This design is intended to encourage representations that are transferable across tasks and suitable for analysis settings such as probing and retrieval, in addition to downstream prediction.

\section{TriForces Framework}
\label{sec:method}

\begin{figure*}[!t]
\centering
\includegraphics[width=\linewidth]{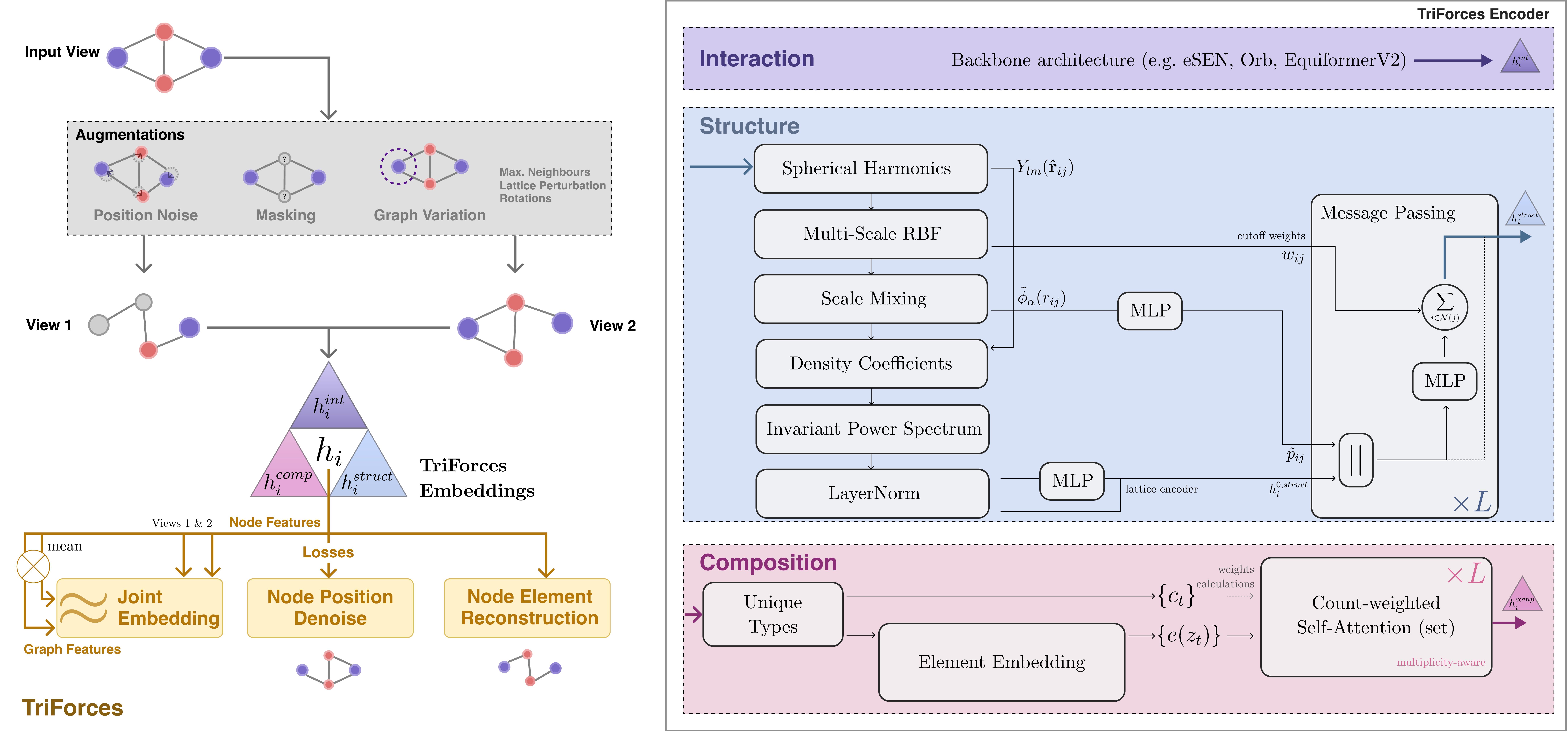}
\caption{TriForces self-supervised pretraining and encoder components. Left: view 1 and view 2 are two independently sampled stochastic augmentations of the same input structure, constructed using position noise, element masking, graph variation, and rotations for non-equivariant backbones. These views support denoising, masked element reconstruction, and non-reconstruction losses acting on node and graph level embeddings. Right: interaction stream from a base geometric GNN, structure stream via invariant power-spectrum features and message passing and a composition transformer with count-weighted attention. The three embeddings are concatenated to form the TriForces features.}
\label{fig:triforces_overview}
\end{figure*}

TriForces, illustrated in Fig.~\ref{fig:triforces_overview}, enhances existing geometric GNNs with (1) a three-stream architecture ensuring retention of compositional and structural information (Sec.~\ref{sec:method_architecture}), and (2) SSL representations that are useful for more accurate predictive models as well as for better generalization and interpretation (Sec.~\ref{sec:method_ssl}).
We use \textbf{TriForces-Streams} for the three-stream architecture trained from random initialization, \textbf{TriForces} for its SSL-initialized version, and \textbf{Base + SSL} for backbone-only SSL. This separates stream factorization from SSL pretraining.

\subsection{Three-Stream Architecture}
\label{sec:method_architecture}

Given an input structure $\mathcal{X} = (\{z_i\}, \{\mathbf{x}_i\})$ with atomic species $z_i$ and positions $\mathbf{x}_i$, represented by a graph $\mathcal{G}$ where nodes correspond to atoms and edges are constructed using a radius cutoff, TriForces produces the learnable node-level representation:
\begin{equation}
\mathbf{h}_i = \big[\mathbf{h}^{\text{comp}}_i,\; \mathbf{h}^{\text{struct}}_i,\; \mathbf{h}^{\text{int}}_i\big]
\end{equation}
via three parallel streams detailed below.

\textbf{Compositional stream (no geometry).}
The compositional stream encodes chemistry without coordinates. For each structure, the atoms list is compressed into a set of \emph{unique elements} $\{(z_t,c_t)\}_{t=1}^{T}$, where $z_t$ is an atomic number and $c_t\in\mathbb{N}$ is its count in the structure ($T\ll N$ in practice). Tokens are initialized with a learnable element embedding,
\begin{equation}
\mathbf{u}_t = \mathbf{e}(z_t).
\end{equation}
We then apply a Transformer over the $T$ tokens and inject stoichiometric information via \emph{count-weighted attention}. For attention head $h$, queries and keys are defined as
$\mathbf{q}^{(h)}_t=\mathbf{W}^{(h)}_Q\mathbf{u}_t$,
$\mathbf{k}^{(h)}_s=\mathbf{W}^{(h)}_K\mathbf{u}_s$. We modify the attention logits by adding a log-count bias:
\begin{equation}
\begin{aligned}
a^{(h)}_{ts}
&= \frac{\big(\mathbf{q}^{(h)}_t\big)^\top \mathbf{k}^{(h)}_s}{\sqrt{d_h}}
   + \log(c_s), \\
\alpha^{(h)}_{ts}
&= \mathrm{softmax}_s\!\big(a^{(h)}_{ts}\big).
\end{aligned}
\end{equation}
This makes attention over unique element tokens equivalent to attention over all atoms of each type, while keeping computation
$\mathcal{O}(T^2)$ and independent of $N$ as shown in Proposition~\ref{prop:attention}.

Unlike composition-only predictors that normalize stoichiometry to fractions or reduced formulas~\cite{wang2021compositionally,goodall2020predicting},
we preserve \emph{absolute} element counts $c_t$, since they encode real physical information about system size and correlate, for example, with energy scale.

\textbf{Structural stream (type-agnostic geometry).}
Many materials properties depend on recurring geometric motifs that are shared across different chemistries.
By learning an atomic type-agnostic geometric representation, the structural stream captures reusable structural patterns and provides
an inductive bias that complements the chemistry-aware interaction stream. Because it is not used to model directional responses, we restrict this stream to rotation-invariant features.

Given neighbour displacements $\mathbf{r}_{ij} = \mathbf{x}_j-\mathbf{x}_i$, distances $r_{ij}$, and directions
$\hat{\mathbf{r}}_{ij}$, we construct a learnable, rotation-invariant local descriptor inspired by
SOAP-style density representations~\citep{bartók2012representing}.
Specifically, we expand each neighbour contribution using a radial basis
$\{\phi_k(r)\}$ (Bessel or Gaussian RBF)~\citep{schütt2017schnet,gasteiger2020fast},
real spherical harmonics $\{Y_{lm}(\hat{\mathbf{r}})\}$ up to $l_{\max}$,
and a set of smooth cutoff functions $\{s_s(r)\}_{s=1}^S$ for multi-scale geometry.

We add learnable parameters through a mixing of the radial and cutoff channels,
\begin{equation}
\tilde{\phi}_{\alpha}(r_{ij}) = \sum_{k,s} (\mathbf{W})_{\alpha,(k,s)}\, s_s(r_{ij})\, \phi_k(r_{ij}),
\end{equation}
which defines geometry-dependent density channels. Here $i$ indexes the central atom, $j\in\mathcal{N}(i)$ its neighbours, $k$ radial basis functions, $s$ cutoff scales, $l,m$ spherical harmonic indices, and $\alpha$ mixed radial-cutoff channels.
For each atom $i$, we accumulate \emph{density coefficients}
\begin{equation}
c_{\alpha l m}(i) = \sum_{j \in \mathcal{N}(i)} \tilde{\phi}_{\alpha}(r_{ij})\, Y_{lm}(\hat{\mathbf{r}}_{ij}),
\end{equation}
which correspond to a local neighbour density expanded in a spherical basis. Rotation invariance is enforced by forming the power spectrum:
\begin{equation}
p_{\alpha\alpha' l}(i) = \sum_m c_{\alpha l m}(i)\, c_{\alpha' l m}(i),
\end{equation}
which is invariant to global rotations and translations by construction and equivalent
to the standard SOAP power spectrum~\citep{bartók2012representing}.
The resulting invariant features are flattened, and mapped through
a small MLP with a residual connection to obtain, for each node, the initial representation
$\mathbf{h}^{\text{struct}}_i$.
For periodic crystals, we concatenate a learned lattice embedding (on normalized cell lengths and angles) broadcast to atoms to provide global periodicity context in addition to the radius cutoff.
We then apply a small number of \emph{invariant message passing} layers to incorporate connectivity and topology: each node aggregates messages from neighbours as a learned function of the neighbor state and rotation-invariant edge features, as detailed in App.~\ref{app:struct_stream}.

\textbf{Interaction stream.}
The interaction stream captures how composition and geometry jointly determine properties, preserving the expressiveness of the base model.
It is typically a geometric GNN~\cite{duval2023hitchhiker} with full access to both species as node features and positions:
\begin{equation}
\{\mathbf{h}^{\text{int}}_i\} = \text{GNN}_{\text{base}}\big(\{z_i\}, \{\mathbf{x}_i\}\big).
\end{equation}
This can be any existing architecture that operates on arrangements of atoms. We demonstrate the improvements with Orb-v3, eSEN, and MACE.

\textbf{Node features.}
The three streams are concatenated to form the final node representation,
\[
\mathbf{h}_i = [\mathbf{h}^{\text{comp}}_i \,\|\, \mathbf{h}^{\text{struct}}_i \,\|\, \mathbf{h}^{\text{int}}_i].
\]
This modular design benefits transfer: tasks depending primarily on composition or geometry
can leverage the corresponding streams directly, while tasks requiring both use the full representation.

\subsection{Self-Supervised Pretraining}
\label{sec:method_ssl}

Architectural separation alone does not guarantee well-organized or transferable representations. We therefore employ self-supervised pretraining to shape the representation space before task-specific supervision, with the goals of stabilizing geometric information, reinforcing chemical correlations, and improving latent representation.
We combine three complementary objectives: a non-reconstruction objective that aligns representations across augmented views, a denoising objective that stabilizes geometric representations, and a masking objective that encourages learning compositional context.

\textbf{Augmentations.}
All self-supervised objectives operate on the same set of stochastic augmentations. For each input structure, we sample two augmented views by applying a shared augmentation pipeline that includes atomic position noise, atom-type masking, and randomized graph construction parameters. This design ensures that all objectives act on compatible perturbations and jointly shape the learned representations. We denote $\tilde{\mathcal{G}}$, the corrupted graph obtained after stochastically applying the augmentations.
We denote the two corrupted views as $\tilde{\mathcal{G}}_1$ and $\tilde{\mathcal{G}}_2$. Each view applies atom-type masking, Gaussian position perturbations, randomized graph construction through cutoff and max-neighbor changes, and random rotations for non-equivariant backbones. Since these augmentations preserve node identity, node-level embeddings can be aligned one-to-one across views.

\textbf{Non-reconstruction objective.}
Non-reconstruction objectives have proven effective in self-supervised representation learning by encouraging invariance to data augmentations while avoiding representation collapse~\cite{chen2020simple, he2020momentum, grill2020bootstrap}. Rather than reconstructing inputs, the non-reconstruction objective aligns embeddings
from multiple augmented views of the same structure using LeJEPA~\cite{balestriero2025lejepa0}.

Since augmentations do not add or remove nodes, we can apply the loss at both the node level (one-to-one correspondence) and the graph level (mean-pooled features):
\begin{equation}
\label{eq:lejepa_loss}
\mathcal{L}_{\text{LeJEPA}} = \mathcal{L}^{\text{g}} + \mathcal{L}^{\text{n}},
\end{equation}
where each term combines a SIGReg~\cite{balestriero2025lejepa0} regularizer that enforces an isotropic Gaussian distribution via random projections, eliminating the need for stop-gradient or momentum encoders while preventing collapse:
\begin{align}
\label{eq:lejepa_terms}
\mathcal{L}^{\text{g}} &= \lambda \mathcal{L}_{\text{SIG}}(\bar{\mathbf{h}}) + (1 - \lambda) \| \bar{\mathbf{h}}_1 - \bar{\mathbf{h}}_2 \|^2 \\
\mathcal{L}^{\text{n}} &= \lambda \mathcal{L}_{\text{SIG}}(\mathbf{h}) + (1 - \lambda) \textstyle\sum_i \| \mathbf{h}_{1,i} - \mathbf{h}_{2,i} \|^2.
\end{align}
In Eq.~\ref{eq:lejepa_terms}, subscripts $1$ and $2$ denote embeddings computed from the two augmented views $\tilde{\mathcal{G}}_1$ and $\tilde{\mathcal{G}}_2$, respectively.
Here, $\mathcal{L}^{\text{g}}$ and $\mathcal{L}^{\text{n}}$ are graph-level and node-level losses, $\bar{\mathbf{h}}$ denotes mean-pooled features, $\mathbf{h}_i$ individual node embeddings, and $\lambda$ the regularization weight.
While this objective promotes invariance across augmentations and improves global organization of the latent space, it does not explicitly preserve fine-grained geometric or compositional distinctions. We therefore complement it with reconstruction-based objectives: denoising and masking.

\textbf{Denoising.}
As shown in \citet{zaidi2022pre0training}, training the model to recover clean atomic configurations from perturbed inputs can be seen as a closely related problem to training a force field.
We corrupt atomic positions by adding Gaussian noise: $\tilde{\mathbf{x}}_i = \mathbf{x}_i + \boldsymbol{\epsilon}_i$, $\boldsymbol{\epsilon}_i \sim \mathcal{N}(0, \sigma^2 \mathbf{I})$, then construct a graph based on those new positions $\tilde{\mathcal{G}}$. The model predicts the noise (or equivalently, the clean positions):
\begin{equation}
\label{eq:denoise_loss}
\mathcal{L}_{\text{denoise}} = \sum_i \| f_\theta(\tilde{\mathcal{G}})_i - \boldsymbol{\epsilon}_i \|^2,
\end{equation}
where $f_\theta(\tilde{\mathcal{G}})_i$ is the predicted noise for node $i$.
Denoising encourages learning stable geometric configurations. For non-equivariant models, it also provides implicit rotation augmentation: the model sees structures in diverse orientations and must learn to predict consistent outputs, aligning its latent space with orientation-related information. For simplicity, we employ a Mean Squared Error (MSE) objective rather than a diffusion loss function~\cite{neumann2024orb}, which we found to be sufficient in practice.

\textbf{Masking.}
Masked atom prediction forces the model to infer element identity from surrounding atoms and geometry,
rather than memorizing isolated embeddings. Following \citet{dong2025understanding}, we find that predicting the atomic numbers of masked nodes improves downstream linear probing performance on properties like energy per atom. This task also benefits the composition stream by helping it learn patterns of elements that frequently co-occur. We randomly mask a fraction $p \sim \mathcal{U}[p_\text{min}, p_\text{max}]$ of nodes by replacing their atom embeddings with a learnable mask token. A classification head $g_\theta$ predicts the original atom types:
\begin{equation}
\label{eq:mask_loss}
\mathcal{L}_{\text{mask}} = -\sum_{i \in \mathcal{M}} \log p_\theta(z_i \mid \tilde{\mathcal{G}}),
\end{equation}
where $\mathcal{M}$ is the set of masked nodes, $z_i$ is the original atomic number and $\tilde{\mathcal{G}}$ is the corrupted graph where masking and other augmentations have been applied.

\textbf{Combined objective.}
Our final pretraining loss combines all three terms:
\begin{equation}
\label{eq:combined_ssl_loss}
\mathcal{L} = \mathcal{L}_{\text{denoise}} + \lambda_{\text{mask}} \mathcal{L}_{\text{mask}} + \lambda_{\text{LeJEPA}} \mathcal{L}_{\text{LeJEPA}}.
\end{equation}
As we show in Sec.~\ref{sec:exp_analysis}, the objectives are complementary and combining them with the additional streams gives further downstream gains,
highlighting their interaction.

\section{Experiments and Results}
\label{sec:experiments}

We pre-train TriForces on the LeMat-Bulk~\cite{siron2025lematbulk} dataset, containing 5M bulk structures with a diverse coverage of the periodic table starting with randomly initialized weights of eSEN, Orb-v3 and MACE. Depending on the experiment, we additionally fine-tune TriForces for downstream tasks.
The remainder of this section presents experiments and results of transfer learning with TriForces of OMat24 (Sec.~\ref{subsec:mlips}), MatBench~\cite{dunn2020benchmarking} (Sec.~\ref{subsec:matbench}), and QM9~\cite{ramakrishnan2014quantum} (Sec.~\ref{subsec:qm9}). We additionally report linear probing metrics (Sec.~\ref{subsec:linearprobe}), examples of similarity retrieval (Sec.~\ref{sec:exp_retrieval}), and ablation studies (Sec.~\ref{sec:exp_analysis}).

\subsection{Comparison to MLIP Benchmarks}
\label{subsec:mlips}

\textbf{Fine-tuning.} We first test whether TriForces improves supervised MLIP fine-tuning for energy, force, and stress prediction. Table~\ref{tab:main_results} summarizes fine-tuned performance on a 4M subset of OMat24~\cite{barroso-luque2024open} for each base architecture, separating the effect of adding streams from the additional effect of SSL pretraining. TriForces improves transfer across all architectures on energy, forces, and stress. For example, on Orb-v3 conservative, TriForces reduces OMat24 energy per atom MAE from 107 to 19.4~meV/atom and force MAE from 150 to 95.5~meV/\AA{} without sacrificing force MAE.
We find the largest gains on energy-conserving models (i.e., where forces are obtained as the gradients of the energy with respect to the positions). Fine-tuning details lie in App.~\ref{app:mlips}.

\textbf{Energy and Forces coupling.}
We hypothesize these gains to happen because energy and force losses provide coupled gradients during conservative training, which can compete during optimization. Prior works mitigate this with staged schedules or specialized initialization. For e.g., MACE~\citep{batatia2022mace0} uses a two-step energy-weighting procedure where forces are trained initially and then the energy-loss weight is increased in a second stage, eSEN~\citep{pmlr-v267-fu25h} trains for half of the steps with direct force training before enabling energy-conservation, and Orb~\citep{neumann2024orb} uses diffusion pretraining. The three-stream TriForces architecture reduces the need for such tricks: we can train conservative models directly while reaching low energy MAE quickly without sacrificing force MAE or having to tune loss coefficients. The composition stream adds force-preserving degrees of freedom without introducing new position dependence.
Appendix~\ref{app:energy_force_proofs} provides a theoretical intuition using rank-based bound connecting to this observation.

\begin{table}[t]
\caption{Fine-tuning on OMat24 (4M subset) for 2 epochs across architectures and training modes (conservative vs direct). We report MAE ($\downarrow$) on the OMat24 validation set and an MPtrj subset for energy per atom $E$, forces $F$, and stress $\sigma$ (units: meV/atom, meV/\AA, meV/\AA$^3$). \textbf{Bold}: best within each architecture. Adding TriForces-Streams/SSL improves most metrics across all architectures, with the full TriForces variant significantly better for conservative models.}
\label{tab:main_results}
\centering
\resizebox{\columnwidth}{!}{%
\begin{tabular}{lcccccc}
\toprule
 & \multicolumn{3}{c}{\textbf{OMat24}} & \multicolumn{3}{c}{\textbf{MPtrj}} \\
\cmidrule(lr){2-4} \cmidrule(lr){5-7}
\textbf{Model} & $E$ $\downarrow$ & $F$ $\downarrow$ & $\sigma$ $\downarrow$ & $E$ $\downarrow$ & $F$ $\downarrow$ & $\sigma$ $\downarrow$ \\
\midrule
\multicolumn{7}{l}{\textit{Orb-v3 Conservative}} \\
\quad Baseline & 107 & 150 & 7.8 & 114 & 125 & 11.0 \\
\quad + TriForces-Streams & 35.6 & 149 & 6.2 & 94.9 & 125 & 10.9 \\
\quad + TriForces & \textbf{19.4} & \textbf{95.5} & \textbf{4.7} & \textbf{66.8} & \textbf{98.4} & \textbf{10.2} \\
\midrule
\multicolumn{7}{l}{\textit{Orb-v3 Direct}} \\
\quad Baseline & 39.4 & 108 & 8.0 & 87.6 & 85.2 & 10.5 \\
\quad + TriForces-Streams & 23.9 & 114 & \textbf{6.5} & 63.3 & \textbf{76.7} & \textbf{9.9} \\
\quad + TriForces & \textbf{21.8} & \textbf{102} & 7.8 & \textbf{59.0} & 85.9 & 10.9 \\
\midrule
\multicolumn{7}{l}{\textit{eSEN (equivariant)}} \\
\quad Baseline & 104 & 80.3 & 6.3 & 105 & \textbf{75.9} & 10.1 \\
\quad + TriForces-Streams & 20.8 & 84.2 & 4.5 & 62.8 & 93.3 & 10.2 \\
\quad + TriForces & \textbf{18.8} & \textbf{78.0} & \textbf{4.4} & \textbf{60.3} & 85.3 & \textbf{10.1} \\
\midrule
\multicolumn{7}{l}{\textit{eSEN Direct (equivariant)}} \\
\quad Baseline & 48.2 & 94.6 & 11.7 & 97.1 & 101 & \textbf{8.6} \\
\quad + TriForces-Streams & 20.0 & 96.3 & 11.7 & \textbf{62.2} & 103 & 8.7 \\
\quad + TriForces & \textbf{18.8} & \textbf{90.2} & \textbf{11.6} & 63.0 & \textbf{86.9} & 8.7 \\
\midrule
\multicolumn{7}{l}{\textit{MACE (equivariant)}} \\
\quad Baseline & 117 & 150 & 8.1 & 125 & 189 & 12.6 \\
\quad + TriForces-Streams & 81.6 & 149 & 8.5 & 120 & 186 & 12.8 \\
\quad + TriForces & \textbf{34.3} & \textbf{142} & \textbf{6.1} & \textbf{96.1} & \textbf{176} & \textbf{12.3} \\
\bottomrule
\end{tabular}
}
\end{table}

\textbf{Data efficiency.} Fig.~\ref{fig:data_efficiency} reports data efficiency on OMat24 fine-tuning with a fixed training budget across dataset sizes. TriForces improves energy and force MAE at every sample size, with the largest gains in the low-data regime: at 20K samples, energy per atom MAE drops from 81.3 to 34.6~meV/atom. Full results are provided in App.~\ref{app:data_efficiency}.

\begin{figure}[t]
    \centering
    \includegraphics[width=\linewidth]{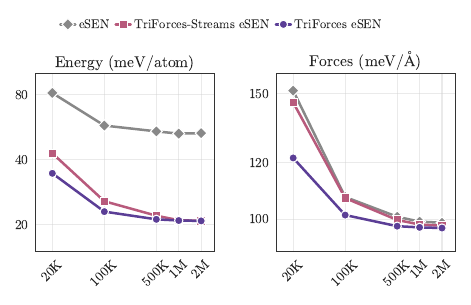}
    \caption{Data efficiency on OMat24 fine-tuning with a fixed training budget across dataset sizes (20K--2M samples). Curves show energy per atom and force MAE ($\downarrow$). TriForces-Streams uses random initialization, while TriForces uses SSL initialization. TriForces improves energy and forces at every data size, with the largest gains in the low-data regime.}
    \label{fig:data_efficiency}
\end{figure}

\textbf{Large supervised scale.}
At full OMat24 scale, SSL is no longer the main source of final accuracy gains. In a preliminary full-scale run, SSL only marginally improves final error but improves convergence speed, suggesting that SSL is most useful in transfer and low-data regimes. App.~\ref{app:ssl_vs_supervised} further compares SSL and supervised pretraining on matched data.

\begin{table}[t]
\caption{Full OMat24 supervised fine-tuning with and without SSL initialization. At this scale, SSL has marginal final-error impact, while the three-stream architecture remains competitive.}
\label{tab:large_scale_ssl}
\centering
\resizebox{.9\columnwidth}{!}{%
\begin{tabular}{lcc}
\toprule
\textbf{Model (backbone)} & $E$ $\downarrow$ & $F$ $\downarrow$ \\
 & meV/atom & meV/\AA \\
\midrule
TriForces-Streams (eSEN) & 13.7 & 62.2 \\
TriForces (eSEN) & 13.5 & 62.1 \\
\bottomrule
\end{tabular}
}
\vspace{-0.25cm}
\end{table}

\textbf{MatBench Discovery.}
MatBench Discovery~\cite{riebesell2023matbench} evaluates a model's ability to identify stable materials by predicting relaxed energies and classifying distance-to-hull against the Materials Project~\cite{jain2020materials} structures. Following standard practices in \citet{barroso-luque2024open}, we train TriForces-pretrained variants of eSEN-sm and Orb-v3 on OMat24, then MPtrj~\cite{deng2023chgnet0} and sAlex; fine-tuning details are in Table~\ref{tab:app_hyperparams_discovery}. Results on the unique-prototypes test set are shown in Table~\ref{tab:matbench_discovery_main}. TriForces Orb-v3 improves MAE (0.024 to 0.020 eV/atom) and $R^2$ (0.821 to 0.874) with slight gains in F1/Acc. TriForces eSEN-sm almost matches the larger eSEN-30M-OAM with ~60\% fewer parameters. Full metrics are in Table~\ref{tab:matbench_discovery_appendix}.

\begin{table}[h]
\caption{Matbench Discovery benchmark results (unique prototypes test set) comparing baselines with TriForces variants. F1 score; MAE (eV/atom); RMSD: root mean square displacement. TriForces eSEN achieves comparable energy MAE to the larger eSEN variant while training up to $\times$5 faster.}
\label{tab:matbench_discovery_main}
\centering
\resizebox{\columnwidth}{!}{%
\begin{tabular}{lcccccc}
\toprule
\textbf{Model} & \textbf{F1} & \textbf{Acc} & \textbf{MAE} & \textbf{$R^2$} & \textbf{RMSD} & \textbf{Params} \\
\midrule
eSEN-30M-OAM & \textbf{0.925} & \textbf{0.977} & \textbf{0.018} & 0.866 & \textbf{0.061} & 30.2M \\
TriForces eSEN-sm & 0.915 & 0.974 & 0.019 & \textbf{0.874} & 0.062 & 12M \\
Orb-v3 & 0.905 & 0.971 & 0.024 & 0.821 & 0.075 & 25.5M \\
TriForces Orb-v3 & 0.910 & 0.972 & 0.020 & 0.874 & 0.065 & 42M \\
\bottomrule
\end{tabular}%
}
\vspace{-0.6cm}
\end{table}

\subsection{Comparison to MatBench Benchmarks}
\label{subsec:matbench}
We report MatBench~\cite{dunn2020benchmarking} results in Table~\ref{tab:matbench_main}, using the standard splits mean over the 5-folds MAE (regression) and F1 (classification). We compare against both self-supervised baselines and DFT-labeled pretraining ($^\dagger$); the DFT variants are the fine-tuned base models checkpoints discussed in Sec.~\ref{subsec:mlips}. TriForces variants achieve the best overall result on 6/8 tasks; for example, Phonons MAE improves from 57.8 to 19.5~cm$^{-1}$ with TriForces eSEN.

\begin{table*}[t]
\caption{MatBench results comparing TriForces pre-training against baselines. TriForces denotes the full method with self-supervised pretraining on top of the three-stream architecture. MAE ($\downarrow$) / F1 ($\uparrow$), mean over the 5 folds. \textbf{Bold}: best within category; \underline{underline}: best overall. $^\dagger$: pre-trained with DFT labels.}
\label{tab:matbench_main}
\centering
\resizebox{\textwidth}{!}{%
\begin{tabular}{lcc|cc|cc|cc}
\toprule
\textbf{Task (Units)} & \textbf{JMP-L$^\dagger$} & \textbf{coGN} & \textbf{MACE$^\dagger$} & \textbf{TriForces MACE} & \textbf{Orb$^\dagger$} & \textbf{TriForces Orb} & \textbf{eSEN$^\dagger$} & \textbf{TriForces eSEN} \\
\midrule
Phonons (cm$^{-1}$) & \textbf{20.6} & 29.7 & 36.7 & \textbf{27.6} & 26.2 & \textbf{22.6} & 57.8 & \underline{\textbf{19.5}} \\
Dielectric & \textbf{0.249} & 0.309 & 0.279 & \underline{\textbf{0.159}} & 0.251 & \textbf{0.230} & \textbf{0.205} & 0.232 \\
Log GVRH (log$_{10}$(GPa)) & \textbf{0.059} & 0.069 & 0.082 & \textbf{0.073} & 0.063 & \textbf{0.058} & 0.093 & \underline{\textbf{0.058}} \\
Log KVRH (log$_{10}$(GPa)) & \textbf{0.045} & 0.054 & 0.055 & \textbf{0.050} & 0.051 & \textbf{0.045} & 0.072 & \underline{\textbf{0.043}} \\
Perovskites (meV) & \textbf{26.0} & 27.0 & 61.4 & \textbf{35.1} & 30.7 & \textbf{26.5} & 40.1 & \underline{\textbf{25.6}} \\
MP Gap (eV) & \underline{\textbf{0.091}} & 0.156 & 0.370 & \textbf{0.250} & 0.194 & \textbf{0.132} & 0.392 & \textbf{0.139} \\
MP E Form (meV/atom) & \underline{\textbf{10.1}} & 17.0 & 40.8 & \textbf{34.4} & 21.1 & \textbf{17.1} & 83.5 & \textbf{20.2} \\
MP Is Metal (F1) & -- & \textbf{0.901} & 0.858 & \textbf{0.876} & 0.895 & \underline{\textbf{0.902}} & 0.811 & \textbf{0.888} \\
\bottomrule
\end{tabular}%
}
\vspace{-0.4cm}
\end{table*}

TriForces variants are consistently better than baselines across tasks, with multiple best or near-best results while being fully self-supervised. DFT-pretrained models remain best on some targets, but TriForces narrows the gap, showing the promises of stream decomposition plus SSL.

To separate stream factorization from parameter count, we also compare against parameter-matched widened backbones in App.~\ref{app:param_match}. TriForces remains better on all 8 MatBench tasks for both eSEN and Orb and on 6/7 QM9 targets for eSEN, indicating that the gains are not explained by capacity alone.

\subsection{Comparison to QM9 Benchmarks}
\label{subsec:qm9}
We evaluate QM9 molecular property prediction~\cite{ramakrishnan2014quantum, ruddigkeit2012enumeration} for eSEN under different SSL pretraining datasets; results for a representative subset of targets are in Table~\ref{tab:qm9_ssl}. With OMol25 SSL pretraining, TriForces improves $\mu$ MAE from 0.023 to 0.018~D and $\alpha$ MAE from 0.098 to 0.075~$a_0^3$. Full results in App.~\ref{app:qm9}.

Pretraining on molecule-containing data (OMol25~\cite{levine2025open} or adding bulk materials) consistently improves over bulk-only and no-SSL baselines. This hints that combining multiple modalities in the pretraining dataset does not have negative effects and we can consider fully pre-training on diverse chemical inputs.

\begin{table}[h]
\caption{QM9 molecular property prediction using the split in \citet{anderson2019cormorant0}. Models differ by SSL pretraining data: none (eSEN), crystalline-bulks only, LeMat-Bulk~\cite{siron2025lematbulk} + OMol25~\cite{levine2025open}, or molecules only. MAE ($\downarrow$). \textbf{Bold}: best. Pretraining on molecule-containing data (OMol25 or Both) generally lowers MAE vs bulk-only or no SSL.}
\label{tab:qm9_ssl}
\centering
\resizebox{\linewidth}{!}{%
\begin{tabular}{lcccccc}
\toprule
\textbf{Model} & $\mu$ & $\alpha$ & $\varepsilon_{\mathrm{HOMO}}$ & $\varepsilon_{\mathrm{LUMO}}$ & $C_v$ & $U_0$ \\
 & (D) & ($a_0^3$) & (meV) & (meV) & (cal/mol K) & (meV) \\
\midrule
eSEN & 0.023 & 0.098 & 23.6 & 25.1 & 0.037 & 9.0 \\
TriForces eSEN (Bulk) & 0.021 & 0.093 & 23.7 & 25.2 & 0.041 & 9.3 \\
TriForces eSEN (All) & 0.018 & 0.079 & 21.0 & 21.0 & 0.032 & \textbf{7.6} \\
TriForces eSEN (OMol) & \textbf{0.018} & \textbf{0.075} & \textbf{20.2} & \textbf{20.4} & \textbf{0.031} & 8.9 \\
\bottomrule
\end{tabular}%
}
\vspace{-0.4cm}
\end{table}

\subsection{Linear Probing}
\label{subsec:linearprobe}

\begin{figure}[t]
    \centering
    \includegraphics[width=0.8\linewidth]{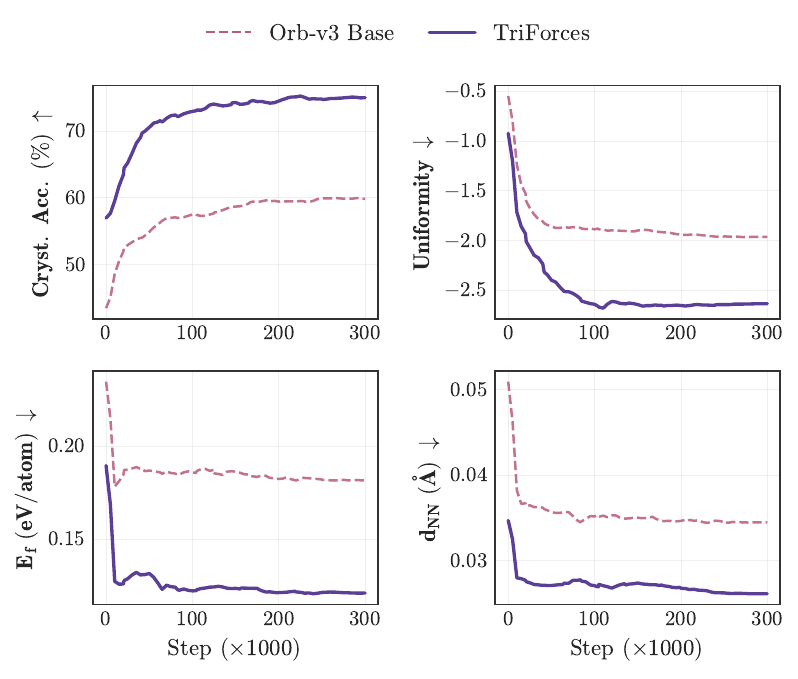}
    \caption{Linear probing metrics during SSL pretraining for Orb-v3 base vs TriForces variants (frozen embeddings). Both curves use the same SSL objectives; Orb-v3 Base is the backbone alone, while TriForces augments the backbone with composition and structure streams. We track formation-energy MAE, crystal-system accuracy, mean nearest-neighbour distance MAE, and uniformity. TriForces improves both compositional/structural probes and latent-space organization compared to the single-stream baseline.}
    \label{fig:linear_probing}
    \vspace{-0.5cm}
\end{figure}

A common evaluation of self-supervised training in fields like vision involves using frozen embeddings of the trained model and measuring linear or non-linear regression on evaluation datasets, to assess the usefulness of the latent representations. While this has been less explored for materials embeddings, we track probing metrics during pretraining in Fig.~\ref{fig:linear_probing}. This comparison isolates architecture under identical SSL training: Orb-v3 Base is the backbone alone, while TriForces uses the same backbone with the two additional streams. Under the same SSL training framework, TriForces variants that add composition and structure streams improve crystal system accuracy while reducing formation-energy error, nearest-neighbour distance, and uniformity~\cite{DBLP:conf/icml/0001I20} compared to the base model.

\subsection{Interpretable Similarity Retrieval}
\label{sec:exp_retrieval}

The three-stream architecture enables targeted similarity search. Given a query structure, we retrieve similar materials using cosine similarity on different stream embeddings: compositional similarity $\text{sim}(\mathbf{h}^{\text{comp}}_q, \mathbf{h}^{\text{comp}}_d)$ finds materials with similar chemistry regardless of structure, structural similarity $\text{sim}(\mathbf{h}^{\text{struct}}_q, \mathbf{h}^{\text{struct}}_d)$ finds geometrically similar materials regardless of composition, and joint similarity uses the full embedding. This decomposition is not possible in standard single-stream models where composition and structure are entangled.

Figure~\ref{fig:retrieval_recall} evaluates $k$-NN retrieval using element set and space group as discrete targets, across different TriForces embedding streams. The composition stream performs best for element-set recall, while the structure stream performs best for space-group recall, matching their intended semantics. Interaction streams track the baseline models. Further qualitative examples are in App.~\ref{app:retrieval}.

\subsection{Further Ablations}
\label{sec:exp_analysis}

Table~\ref{tab:ssl_ablation} summarizes a sequential stream ablation on OMat24-2M without SSL, using an energy-conserving eSEN-sm backbone. Full SSL-objective and stream ablations are provided in App.~\ref{app:ssl_ablation_full}.

\textbf{Impact of three streams.} The streams are complementary rather than uniformly redundant: composition primarily improves energy prediction, interaction is critical for forces and stress, and structure contributes most to geometric probing and representation quality. Removing the compositional stream most strongly degrades formation energy and fine-tuning energy/forces, while removing the structural stream most hurts crystal-system probing and uniformity, highlighting the complementarity of the two streams. Removing both streams collapses fine-tuning energy.

\begin{figure}[!t]
    \centering
    \includegraphics[width=\linewidth]{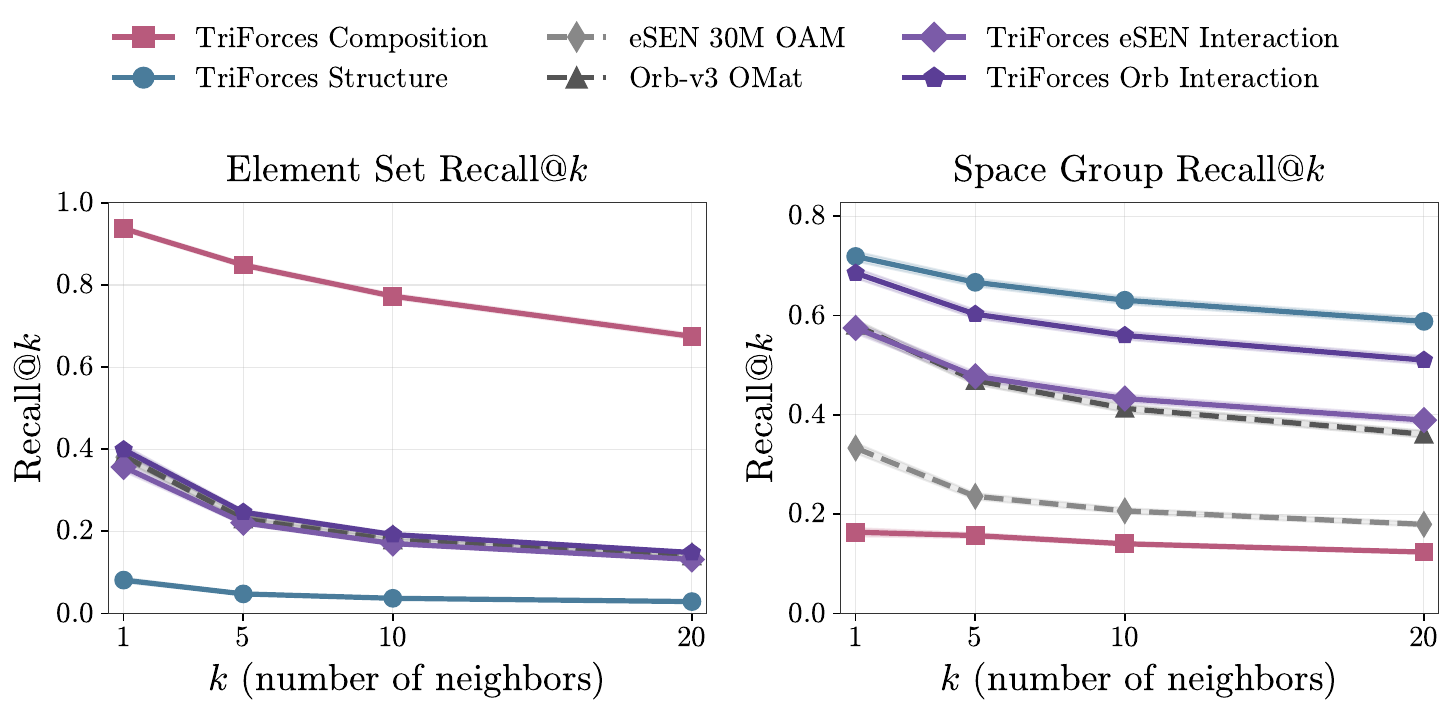}
    \caption{Nearest-neighbour recall@$k$ for element-set retrieval (left) and space-group retrieval (right) using different embedding streams. The composition stream is best for element-set recall, the structural stream is best for space-group recall, and the interaction stream tracks DFT-pretrained baselines.}
    \label{fig:retrieval_recall}
    \vspace{-0.6cm}
\end{figure}

\textbf{Impact of SSL objectives.} LeJEPA-only fine-tuning underperforms while giving good uniformity metrics and probing results on geometric properties, indicating that stream separation, denoising, masking, and LeJEPA are complementary: denoising stabilizes geometry, masking improves compositional probes, and LeJEPA organizes the latent space for retrieval and transfer. Alternative SSL objectives (swapping LeJEPA with Barlow Twins~\cite{zbontar2021barlow}, Crystal Twins~\cite{magar2022crystal}, or denoising-only pretraining~\cite{zaidi2022pre0training}) show mixed behavior and do not match TriForces across the full set of metrics but improves forces prediction comparatively to the full TriForces.

\section{Discussion}
\label{sec:discussion}

TriForces targets two pain points in atomistic foundation models: brittle transfer across tasks and
representations that are hard to
reuse~\cite{niblett2025transferability,pinto2025unlocking,focassio2024performance,li2025platonic}. The stream decomposition appears to do more than add capacity: gains are strongest when a stream aligns with the target signal, while the interaction stream often tracks the base model.

\textbf{Architecture versus SSL.}
The three-stream architecture accounts for most gains in large-scale supervised settings, while SSL provides its largest benefits in low-data transfer, representation organization, and retrieval. This regime dependence is important practically: when abundant supervised labels are available, SSL is not essential for final predictive accuracy, but it still improves initialization and produces embeddings that are more useful for analysis.

\textbf{Pretraining data and non-equilibrium structures.}
Both trajectory-rich, non-equilibrium structures and equilibrium structures can be used with
TriForces~\cite{siron2025lematbulk,DBLP:journals/tmlr/LiaoSSD24,neumann2024orb}, and matching the
pretraining distribution to the target regime remains important (e.g., bulk pretraining may not transfer
to molecules as the latter do not use periodic boundary conditions). The advantage is that pretraining data can be constructed without labels or costly
computations and benefits from diversity in the input data.

\begin{table}[t]
\caption{Sequential stream ablation on OMat24-2M without SSL using an energy-conserving eSEN-sm backbone. Each transition adds one stream. C, S, and I denote composition, structure, and interaction streams. Metrics are reported in meV/atom for $E$ and probe $E$, meV/\AA{} for $F$, meV/\AA$^3$ for $\sigma$, and percentage for Cry. Composition mainly improves energy, interaction is essential for forces and stress, and structure improves geometric probing.}
\label{tab:ssl_ablation}
\centering
\footnotesize
\setlength{\tabcolsep}{2.5pt}
\resizebox{\columnwidth}{!}{%
\begin{tabular}{lccccc}
\toprule
\textbf{Transition} & $E$ $\downarrow$ & $F$ $\downarrow$ & $\sigma$ $\downarrow$ & Cry $\uparrow$ & Probe $E$ $\downarrow$ \\
\midrule
I $\to$ C+I & 102$\to$16 & 79$\to$77 & 6.1$\to$4.3 & 44.5$\to$50.6 & 61.4$\to$55.0 \\
C+S $\to$ Full & 68$\to$16 & 249$\to$78 & 8.9$\to$4.4 & 66.1$\to$68.1 & 126.6$\to$55.4 \\
C+I $\to$ Full & 16$\to$16 & 77$\to$76 & 4.3$\to$4.3 & 50.6$\to$68.1 & 55.0$\to$55.4 \\
\bottomrule
\end{tabular}%
}
\vspace{-0.35cm}
\end{table}

\textbf{Interpretable retrieval and analysis.}
The compositional and structural streams make the learned representation reusable beyond property prediction: they support chemistry-only retrieval without entangling geometry, and vice versa~\cite{hu2019strategies,stark20223d}. This enables targeted comparison of candidate materials in embedding space, while learning better fusion than concatenation without hurting the SSL objectives remains an open question.

\textbf{Computational considerations.}
Extra streams add parameters and compute, with the structural stream dominating overhead. However, these
added parameters are comparatively lightweight versus scaling the interaction backbone (they are
coordinate-free or invariant), and parameter and compute-matched comparisons show the gains are not a
simple parameter trade-off. App.~\ref{app:compute} summarizes these cost comparisons.

\textbf{Limitations and future work.}
Our retrieval analysis is limited to bulk crystals and molecules; surfaces, defects, and complex
interfaces may behave differently. Future work could explore these regimes, richer fusion
between streams, and study scaling to larger models and datasets.
Although this work trains TriForces as a pretraining framework, the additional streams could also be used on already-trained MLIPs by freezing or lightly adapting the interaction backbone, training the composition and structure streams, and then fine-tuning the combined model. Finally, the factorized representation is a natural candidate encoder for generative models such as diffusion or flow-matching models, where separating composition and geometry may improve controllability; we leave this direction to future work.

\section{Conclusion}
\label{sec:conclusion}

We introduced TriForces, a model‑agnostic three‑stream augmentation combined with multi-objective self supervised pretraining that consistently improves accuracy, data efficiency, and interpretability across datasets and backbones (Orb‑v3, eSEN, MACE). Its plug‑in design makes it immediately applicable to new or modality‑specific architectures while enabling chemistry and structure‑aware retrieval. We released pretrained checkpoints and training recipes to support reuse and further study in downstream atomistic modeling.

\section*{Acknowledgements}
This project was provided with computer and storage resources by GENCI at
CINES and IDRIS under the allocations 2025-AD011015800, 2025-A0181016212, 2025-AD011016353, and 2026-A0201016212 on the Jean Zay and Adastra supercomputers. This work was also performed using computational resources from the “Mésocentre” computing center of Université Paris-Saclay, CentraleSupélec and École Normale Supérieure Paris-Saclay supported by CNRS and Région Île-de-France \url{https://mesocentre.universite-paris-saclay.fr/}.

\section*{Impact Statement}
This paper presents work whose goal is to advance machine learning methods for atomistic and materials modeling. Advances in atomistic and materials modelling can enable beneficial applications such as clean energy technologies, improved catalysts, and safer structural materials, though they may also contribute to the development of materials with harmful or dual-use applications. The methods presented in this work operate at a general modelling level and do not target any specific application domain or material class. We support responsible use of these methods.

\bibliographystyle{plainnat}
\bibliography{refs}

\newpage
\appendix
\onecolumn
\section*{Summary of the Appendix}
This appendix provides supplementary material organized as follows.
Appendix~\ref{app:arch_theory} details the composition and structural stream architectures and presents theoretical analysis of energy and force gradient coupling.
Appendix~\ref{app:ablations} reports additional experimental results including ablations, QM9 benchmarks, and data efficiency studies.
Appendix~\ref{app:retrieval} analyzes learned representations through probing, visualization, and retrieval examples.
Appendix~\ref{app:hyperparams} lists hyperparameters, computational costs, pipeline summaries, and pseudocode for pretraining and fine-tuning.

\section{Architecture and Theory}
\label{app:arch_theory}

\subsection{Composition Stream Details}
\label{app:comp_stream}

We use the same composition stream across all base architectures unless otherwise noted. Table~\ref{tab:comp_stream_hparams} summarizes shared hyperparameters.
We denote the composition stream width $d_{\text{comp}}$, number of layers $L_{\text{comp}}$, attention heads $H_{\text{comp}}$, FFN width $d_{\text{ff}}$, dropout $p$, and element vocabulary size $|\mathcal{Z}|$. Proposition~\ref{prop:attention} shows the equivalence between using the unique set of elements and the multiset.

\begin{proposition}[Count-weighted attention equals attention over repeated tokens]
\label{prop:attention}
Let $\{(u_s,c_s)\}_{s=1}^T$ be unique element tokens with counts $c_s\in\mathbb{N}$.
Fix a query token $t$ and define logits $\ell_{ts}\in\mathbb{R}$ and values $v_s$ (for one attention head).
Consider (i) \emph{expanded} attention over a multiset where token $s$ is repeated $c_s$ times, each copy having
the same logit $\ell_{ts}$ and value $v_s$, and (ii) \emph{compressed} attention over unique tokens with
log-count bias:
\[
\alpha_{ts}=\mathrm{softmax}_s(\ell_{ts}+\log c_s).
\]
Then the attention outputs are identical:
\[
\sum_{j\in \text{expanded}} \mathrm{softmax}_j(\ell_{t,\tau(j)})\,v_{\tau(j)}
\;=\;
\sum_{s=1}^T \alpha_{ts}\,v_s,
\]
where $\tau(j)$ maps an expanded index to its token type.
\end{proposition}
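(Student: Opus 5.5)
The plan is to write out both sides explicitly and compare normalizers; no earlier result is needed beyond the definition of softmax. Let $N=\sum_{s=1}^T c_s$ be the size of the expanded multiset and let $\tau:\{1,\dots,N\}\to\{1,\dots,T\}$ be the type map, so that $|\tau^{-1}(s)|=c_s$. Since each $c_s\in\mathbb{N}$ appears as an actual count of an element present in the structure, we have $c_s\ge 1$, and $\log c_s$ is finite. This is the only point requiring care: if zero counts were allowed, one would adopt the convention $\log 0=-\infty$, which simply removes the token from both sides.

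First, compute the expanded partition function by grouping indices by type:
\[
Z_{\text{exp}}=\sum_{j=1}^N e^{\ell_{t,\tau(j)}}=\sum_{s=1}^T\sum_{j\in\tau^{-1}(s)} e^{\ell_{ts}}=\sum_{s=1}^T c_s e^{\ell_{ts}}.
\]
The compressed partition function is
\[
Z_{\text{comp}}=\sum_{s=1}^T e^{\ell_{ts}+\log c_s}=\sum_{s=1}^T c_s e^{\ell_{ts}},
\]
so the two normalizers coincide.

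Next, regroup the expanded output sum over the fibers $\tau^{-1}(s)$. Every index $j$ in the fiber $\tau^{-1}(s)$ carries the same weight $e^{\ell_{ts}}/Z$ and the same value $v_s$. The contribution of that fiber is therefore
\[
c_s\, e^{\ell_{ts}} v_s / Z = e^{\ell_{ts}+\log c_s}\, v_s / Z = \alpha_{ts} v_s.
\]
Summing over $s$ yields the claimed identity. As a remark, the same argument shows that the aggregated attention mass on type $s$ in the expanded model equals $\alpha_{ts}$, and it applies head-by-head, so it extends to multi-head attention. There is no real obstacle; the only subtle step is the regrouping, which is justified because $\tau$ partitions the expanded index set into its fibers.
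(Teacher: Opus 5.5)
Your proof is correct and follows the same route as the paper: group the expanded indices by type so that the normalizer becomes $\sum_r c_r e^{\ell_{tr}}$ and the mass on type $s$ becomes $c_s e^{\ell_{ts}} = e^{\ell_{ts}+\log c_s}$, then multiply by $v_s$ and sum. Your remark that $c_s \ge 1$ (so $\log c_s$ is finite) makes explicit an assumption the paper leaves implicit.
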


\begin{proof}
Each copy of token $s$ is identical because embeddings $\mathbf{u}_t=\mathbf{e}(z_t)$ only depend on $t$.
In the expanded softmax, the total unnormalized mass assigned to type $s$ is $c_s e^{\ell_{ts}}$ (there are $c_s$
identical copies), and the normalizer is $\sum_{r=1}^T c_r e^{\ell_{tr}}$. Hence the total probability mass for
type $s$ is
\[
\forall\; s \in [1,T], \; \alpha^{\mathrm{cw}}_{ts} = \frac{c_s e^{\ell_{ts}}}{\sum_{r} c_r e^{\ell_{tr}}}
=
\frac{e^{\ell_{ts}+\log c_s}}{\sum_{r} e^{\ell_{tr}+\log c_r}}
=
\mathrm{softmax}_s(\ell_{ts}+\log c_s),
\]
and multiplying by $v_s$ and summing over $s$ gives the same output as the expanded attention.
\end{proof}

\begin{table}[h]
\caption{Common composition stream hyperparameters used across TriForces models.}
\label{tab:comp_stream_hparams}
\centering
\small
\begin{tabular}{ll}
\toprule
\textbf{Hyperparameter} & \textbf{Value} \\
\midrule
$d_{\text{comp}}$ (width) & 256 \\
$L_{\text{comp}}$ (layers) & 4 \\
$H_{\text{comp}}$ (heads) & 8 \\
$d_{\text{ff}}$ (FFN width) & $4\times d_{\text{comp}}$ \\
$p$ (dropout) & 0.1 \\
$|\mathcal{Z}|$ (element vocabulary) & 100 \\
Counts $c_t$ (stoichiometry) & Raw (log-count attention bias) \\
\bottomrule
\end{tabular}
\end{table}

\subsection{Structural Stream Details}
\label{app:struct_stream}

We use the same structural stream across all base architectures unless otherwise noted. Table~\ref{tab:struct_stream_hparams} summarizes shared hyperparameters.
We denote the structural stream width $d_{\text{struct}}$, cutoff $r_{\text{cut}}$, radial basis count $K$, mixed channels $K'$, angular cutoff $l_{\max}$, MLP layers $L_{\text{struct}}$, message-passing layers $L_{\text{mp}}$, and cutoff scales $S$.

\begin{table}[h]
\caption{Common structural stream hyperparameters used across TriForces models.}
\label{tab:struct_stream_hparams}
\centering
\small
\begin{tabular}{ll}
\toprule
\textbf{Hyperparameter} & \textbf{Value} \\
\midrule
$d_{\text{struct}}$ (width) & 256 \\
$r_{\text{cut}}$ & 6.0 \\
$K$ (radial basis) & 8 \\
$K'$ (mixed channels) & 8 \\
$l_{\max}$ (spherical harmonics) & 4 \\
Radial basis type & Bessel \\
$L_{\text{struct}}$ (MLP layers) & 3 \\
$L_{\text{mp}}$ (message passing) & 2 \\
$S$ (cutoff scales) & 3 (0.5x, 0.75x, 1.0x) \\
Lattice encoding & On (periodic systems) \\
\bottomrule
\end{tabular}
\end{table}

\textbf{Radial basis and multi-scale envelopes.}
We use either a Bessel basis or a Gaussian RBF basis for $\{\phi_k\}_{k=1}^{K}$, following common atomistic encoders~\citep{schütt2017schnet,  gasteiger2020fast}. To improve robustness across local length scales, we employ $S$ smooth cosine cutoffs $\{s_s(r)\}_{s=1}^{S}$ with different cutoff radii and concatenate the resulting radial features~\citep{gasteiger2020fast}:
\begin{equation}
\Phi(r) \;=\; \big[ s_1(r)\phi_1(r),\dots,s_1(r)\phi_K(r)\;\|\;\cdots\;\|\; s_S(r)\phi_1(r),\dots,s_S(r)\phi_K(r)\big].
\end{equation}
A learnable linear map then mixes this multi-scale radial representation into $K'$ channels before forming the power spectrum.

\textbf{Rotation-invariant power spectrum.}
Given coefficients $c_{\alpha lm}(i)$ computed from the mixed radial features, we form invariants
$p_{\alpha\alpha'l}(i)=\sum_m c_{\alpha lm}(i)c_{\alpha' lm}(i)$.
We retain only the upper triangle $\alpha\le\alpha'$ to avoid redundancy, and concatenate across $l=0,\dots,l_{\max}$.
In our implementation we use $l_{\max}=4$ (i.e., $(l_{\max}+1)^2$ real spherical harmonic components per edge).

\textbf{Lattice encoding for periodic systems.}
For crystals, we encode the unit cell with a small MLP over nine scale-aware features: normalized lattice vector lengths, normalized inter-vector angles, $\log(\text{volume}/N)$, $\log(\text{density})$, and an orthogonality score. The resulting lattice embedding is broadcast to atoms and concatenated to the invariant local descriptor prior to the MLP projection. We disable this component for non-periodic systems.

\textbf{Invariant message passing.}
After projecting the invariant descriptor to the stream width, we apply $L_{\text{mp}}$ message passing layers that use only geometric edge features (projected multi-scale radial encodings). Messages are aggregated following \Cref{eq:message_passing}:
\begin{equation}
\label{eq:message_passing}
\begin{aligned}
\mathbf{h}^{(l+1)}_j
&= \mathbf{h}^{(l)}_j
+ \psi^{(l)}\!\Bigg(
\Bigg[
\mathbf{h}^{(l)}_j \;\Bigg\|\;
\frac{1}{\sum_{i\in\mathcal{N}(j)} s(r_{ij})}
\sum_{i\in\mathcal{N}(j)}
s(r_{ij})\,
\phi^{(l)}\!\Big(
\big[\mathbf{h}^{(l)}_i \,\|\, \eta(r_{ij})\big]
\Big)
\Bigg]
\Bigg),
\end{aligned}
\end{equation}
where $\eta(r_{ij})$ denotes the learned radial edge embedding obtained by
projecting the multi-scale cutoff-weighted radial basis functions
introduced in the structural descriptor, $\psi^{(l)}$ is a learned MLP, and $s(r_{ij})$ is the same
smooth cutoff envelope used for degree normalization.

\subsection{Stream Fusion Details}
\label{app:stream_fusion}

We use simple concatenation of stream embeddings at the node level,
$\mathbf{h}_i = [\mathbf{h}^{\text{comp}}_i \,\|\, \mathbf{h}^{\text{struct}}_i \,\|\, \mathbf{h}^{\text{int}}_i]$, and analogously for graph-level features. This preserves stream-level interpretability, which is useful for decomposed similarity analysis~\cite{hu2019strategies, stark20223d, li2025platonic}. More expressive fusion operators (e.g., gated mixing or attention across streams) are possible but not explored here; we leave systematic study of fusion design to future work.

\begin{figure}
    \centering
    \begin{minipage}{.48\textwidth}
        \centering
        \includegraphics[width=\linewidth]{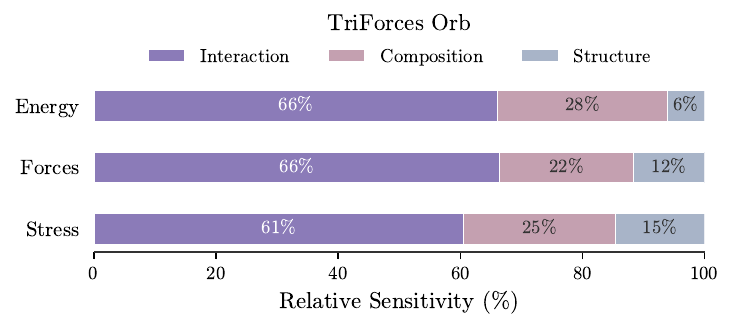}
    \end{minipage}%
    \hfill%
    \begin{minipage}{.48\textwidth}
        \centering
        \includegraphics[width=\linewidth]{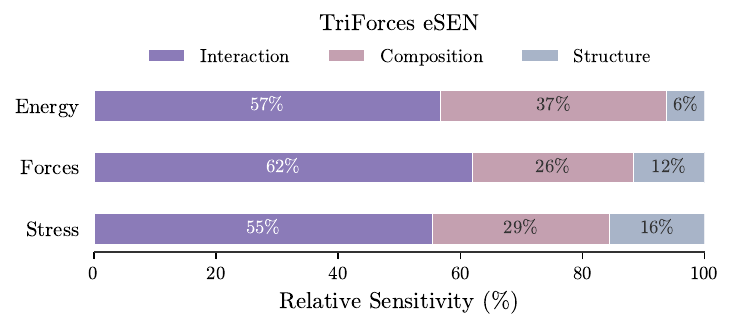}
    \end{minipage}
    \caption{Relative sensitivity of predictions to each stream for TriForces Orb-v3 (left) and eSEN (right). Sensitivity is computed as the mean gradient norm of a target scalar (energy sum, force-norm sum, or stress norm) with respect to each stream's node features; higher values indicate stronger dependence. Sensitivity profiles vary by target, indicating complementary roles for composition, structure, and interaction streams.}
    \label{fig:stream-sensitivity}
\end{figure}

\subsection{Energy and Forces Gradient Coupling}
\label{app:energy_force_proofs}

We provide a short derivation clarifying how energy and force supervision interact under conservative training, and state conditions under which adding coordinate-free degrees of freedom like the composition stream reduce this coupling. We focus on the force sensitivity at the first-order, i.e., how much gradient updates that change the energy prediction on a given structure will impact force prediction.

\textbf{Notations.}
Let $\mathbf{x}=(\mathbf{x}_1,\dots,\mathbf{x}_N),\; x_i\in \mathbb{R}^3$ denote atomic positions and let $E_\theta(\mathbf{x},\mathbf{z})\in\mathbb{R}$ be the predicted total energy, parametrized by $\theta$, conditioned on species $\mathbf{z}$. Conservative forces are
\begin{equation}
\mathbf{F}_k(\theta) \;=\; -\nabla_{\mathbf{x}_k} E_\theta(\mathbf{x},\mathbf{z}), \qquad k=1,\dots,N.
\end{equation}
Consider the per-structure loss
\begin{equation}
\mathcal{L}(\theta)
\;=\;
(E_\theta - E^*)^2
\;+\;
\lambda \sum_{k=1}^N \|\mathbf{F}_k(\theta) - \mathbf{F}_k^*\|^2.
\label{eq:cons_loss}
\end{equation}
Throughout, we assume that $E_\theta$ is twice continuously differentiable in $(\theta,\mathbf{x})$, which holds with SiLU activation functions, and when the graph neighbours are fixed for a given input, so that mixed partials commute.

\begin{proposition}[Gradient coupling in conservative training]
\label{prop:grad_coupling}
Define the energy Jacobian $\boldsymbol{J}_\theta := \frac{\partial E_\theta}{\partial\theta}$ and the mixed Hessians
$\boldsymbol{H}_{\theta,\mathbf{x}_k} := \frac{\partial^2 E_\theta }{\partial\theta\,\partial\mathbf{x}_k}$.
Then the parameter gradient of~\eqref{eq:cons_loss} is
\begin{equation}
\nabla_{\theta}\mathcal{L}
=
2(E_\theta - E^*)\, \boldsymbol{J}_\theta
-
2\lambda \sum_{k=1}^N
(\mathbf{F}_k(\theta) - \mathbf{F}_k^*)^\top
\boldsymbol{H}_{\theta,\mathbf{x}_k}^{\top}.
\label{eq:grad_coupling}
\end{equation}
\end{proposition}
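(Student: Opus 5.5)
The argument is a direct chain-rule computation on the two terms of the loss~\eqref{eq:cons_loss}, differentiated separately with respect to $\theta$. I will treat $\theta\in\mathbb{R}^P$ as a flat parameter vector and $\boldsymbol{J}_\theta$ as a (row or column) vector in $\mathbb{R}^P$. The mixed Hessian $\boldsymbol{H}_{\theta,\mathbf{x}_k}$ is a $P\times 3$ matrix with entries $\partial^2 E_\theta/\partial\theta_p\,\partial x_{k,a}$.

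For the energy term, the chain rule gives $\nabla_\theta (E_\theta-E^*)^2 = 2(E_\theta-E^*)\,\boldsymbol{J}_\theta$ immediately.

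For the force term, I first compute the parameter Jacobian of each force. Since $\mathbf{F}_k(\theta)=-\nabla_{\mathbf{x}_k}E_\theta$, its derivative with respect to $\theta$ is $\partial \mathbf{F}_k/\partial\theta = -\partial^2 E_\theta/\partial\mathbf{x}_k\partial\theta$. This is the point where the standing assumption is used. Because $E_\theta$ is $C^2$ in $(\theta,\mathbf{x})$ and the neighbour graph is held fixed, Schwarz's theorem lets us swap the order of differentiation. So $\partial \mathbf{F}_k/\partial\theta = -\boldsymbol{H}_{\theta,\mathbf{x}_k}^{\top}$, a $3\times P$ matrix. The chain rule applied to $\|\mathbf{F}_k-\mathbf{F}_k^*\|^2$ then gives $2(\mathbf{F}_k-\mathbf{F}_k^*)^\top(\partial\mathbf{F}_k/\partial\theta) = -2(\mathbf{F}_k-\mathbf{F}_k^*)^\top\boldsymbol{H}_{\theta,\mathbf{x}_k}^{\top}$. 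Multiplying by $\lambda$ and summing over $k$ gives the second term of~\eqref{eq:grad_coupling}.

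Adding the two contributions gives the claimed expression. There is no real obstacle here. The only points needing care are the following.
\begin{itemize}
\item Transposes must be kept consistent, so that both terms are row vectors in $\mathbb{R}^{1\times P}$ (or both columns).
\item The symmetry of mixed partials must be justified. It can fail if the radius-graph neighbour set changes with $\mathbf{x}$, which makes $E_\theta$ nonsmooth in $\mathbf{x}$. Hence the fixed-neighbour assumption is required, and I would state it explicitly at that step.
\end{itemize}
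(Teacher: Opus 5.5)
Your proposal is correct and follows the paper's proof essentially step for step: differentiate the energy term directly, then apply the chain rule to each force residual, using symmetry of mixed partials (under the $C^2$, fixed-neighbour assumption) to identify $\nabla_\theta \mathbf{F}_k = -\boldsymbol{H}_{\theta,\mathbf{x}_k}^\top$. Your extra care about consistent transposes and about exactly where the fixed-neighbour hypothesis enters is a useful clarification rather than a different route.
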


\begin{proof}
Differentiate the energy term:
$\nabla_\theta (E_\theta-E^*)^2 = 2(E_\theta-E^*)\,\nabla_\theta E_\theta
=2(E_\theta-E^*)\,J_\theta$.
For each force term,
\begin{equation}
\nabla_\theta \|\mathbf{F}_k-\mathbf{F}_k^*\|^2
=
2(\mathbf{F}_k-\mathbf{F}_k^*)^\top \nabla_\theta \mathbf{F}_k.
\end{equation}
Since $\mathbf{F}_k=-\nabla_{\mathbf{x}_k}E_\theta$,
\begin{equation}
\nabla_\theta \mathbf{F}_k
=
-\nabla_\theta \nabla_{\mathbf{x}_k}E_\theta
=
-\nabla_{\mathbf{x}_k}\nabla_\theta E_\theta
=
-\Big(\tfrac{\partial^2 E_\theta}{\partial\theta\,\partial\mathbf{x}_k}\Big)^\top
=
- \boldsymbol{H}_{\theta,\mathbf{x}_k}^\top,
\end{equation}
where we used commutativity of mixed partials. Substituting into~\eqref{eq:cons_loss} yields~\eqref{eq:grad_coupling}.
\end{proof}

\textbf{Force-preserving directions.}
For a small parameter update $\Delta\theta$, a Taylor expansion gives
\begin{equation}
\mathbf{F}_k(\theta+\Delta\theta)
=
\mathbf{F}_k(\theta)
+
\nabla_\theta \mathbf{F}_k(\theta)\,\Delta\theta
+
o(\|\Delta\theta\|),
\end{equation}
hence
\begin{equation}
\delta \mathbf{F}_k
:=
\mathbf{F}_k(\theta+\Delta\theta)-\mathbf{F}_k(\theta)
=
\nabla_\theta \mathbf{F}_k(\theta)\,\Delta\theta
+
o(\|\Delta\theta\|).
\label{eq:first_order_force_change_taylor}
\end{equation}
Using $\nabla_\theta \mathbf{F}_k(\theta)=-\boldsymbol{H}_{\theta,\mathbf{x}_k}^\top$ (from Proposition~\ref{prop:grad_coupling}),
\begin{equation}
\delta \mathbf{F}_k
=
- \boldsymbol{H}_{\theta,\mathbf{x}_k}^\top \Delta\theta
+
o(\|\Delta\theta\|).
\label{eq:first_order_force_change}
\end{equation}
Based on this, let the \emph{force-preserving subspace} (at first-order) be
\begin{equation}
\mathcal{S_\theta}
:=
\bigcap_{k=1}^N \mathrm{Null}\!\left(\boldsymbol{H}_{\theta,\mathbf{x}_k}^\top\right),
\end{equation}
so that $\Delta\theta\in\mathcal{S}_\theta$ implies $\delta\mathbf{F}_k = o(\|\Delta\theta\|)$ for all $k$.

\textbf{Energy descent with force preservation.}
Let $\mathbf{g}_E := \nabla_\theta E_\theta(\mathbf{x},\mathbf{z}) = \boldsymbol{J}_\theta$.
Any update direction $\Delta\theta$ that preserves forces to first order must satisfy $\Delta\theta\in\mathcal{S}_\theta$.
Consequently, among force-preserving directions, the energy decrease is governed by the projection of $\mathbf{g}_E$ onto $\mathcal{S}_\theta$:
\begin{equation}
E_\theta(\theta+\Delta\theta) - E_\theta(\theta)
=
\mathbf{g}_E^\top \Delta\theta + o(\|\Delta\theta\|),
\qquad \Delta\theta\in\mathcal{S}_\theta.
\end{equation}
This shows that energy and force trade-offs can arise when $\mathbf{g}_E$ has limited component in $\mathcal{S}_\theta$.

\begin{lemma}[Force invariance under additive energy terms]
\label{lem:additive_decoupling}
Suppose the energy decomposes as
\begin{equation}
E_\theta(\mathbf{x},\mathbf{z})
=
E_{\text{geom}}(\mathbf{x},\mathbf{z};\theta_{\text{geom}})
+
E_{\text{cf}}(\mathbf{z};\theta_{\text{cf}}),
\label{eq:additive_energy}
\end{equation}
where $E_{\text{cf}}$ does not depend on $\mathbf{x}$.
Then for all $k$, $\nabla_{\mathbf{x}_k}E_{\text{cf}}(\mathbf{z};\theta_{\text{cf}})=\mathbf{0}$ and hence
\begin{equation}
\mathbf{F}_k(\theta)
=
-\nabla_{\mathbf{x}_k}E_{\text{geom}}(\mathbf{x},\mathbf{z};\theta_{\text{geom}}),
\end{equation}
so any update to $\theta_{\text{cf}}$ can change energy while leaving forces \emph{exactly unchanged}.
Equivalently, $\boldsymbol{H}_{\theta_{\text{cf}},\mathbf{x}_k}=\mathbf{0}$ for all $k$.
\end{lemma}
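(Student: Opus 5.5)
The argument is a direct computation from the additive form \eqref{eq:additive_energy}, carried out in three short steps: show $E_{\text{cf}}$ contributes no force, show $\theta_{\text{cf}}$ does not enter the forces at all, then read off the mixed Hessian. The only prerequisite is the smoothness assumption already stated before Proposition~\ref{prop:grad_coupling}, namely that $E_\theta$ is $C^2$ in $(\theta,\mathbf{x})$ with fixed graph neighbours.

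First, since $E_{\text{cf}}(\mathbf{z};\theta_{\text{cf}})$ has no argument $\mathbf{x}$, its partial derivative with respect to any $\mathbf{x}_k$ vanishes identically: $\nabla_{\mathbf{x}_k}E_{\text{cf}}=\mathbf{0}$ for every $\theta_{\text{cf}}$ and every $\mathbf{z}$. Using linearity of the gradient on \eqref{eq:additive_energy} and the definition $\mathbf{F}_k=-\nabla_{\mathbf{x}_k}E_\theta$ then gives
\[
\mathbf{F}_k(\theta) = -\nabla_{\mathbf{x}_k}E_{\text{geom}}(\mathbf{x},\mathbf{z};\theta_{\text{geom}}).
\]

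Second, I will note that the right-hand side depends only on $\theta_{\text{geom}}$. Hence any update $\theta_{\text{cf}}\mapsto\theta_{\text{cf}}+\Delta\theta_{\text{cf}}$, with $\theta_{\text{geom}}$ held fixed, leaves every $\mathbf{F}_k$ exactly unchanged. This is an exact statement, not merely a first-order one. Such an update can still change the energy whenever $E_{\text{cf}}$ depends non-trivially on $\theta_{\text{cf}}$.

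Third, for the Hessian statement I differentiate the force identity with respect to $\theta_{\text{cf}}$. Because $\mathbf{F}_k$ is constant in $\theta_{\text{cf}}$, we get $\nabla_{\theta_{\text{cf}}}\mathbf{F}_k=\mathbf{0}$. By the identity $\nabla_\theta\mathbf{F}_k=-\boldsymbol{H}_{\theta,\mathbf{x}_k}^\top$ from Proposition~\ref{prop:grad_coupling}, restricted to the $\theta_{\text{cf}}$ block, this yields $\boldsymbol{H}_{\theta_{\text{cf}},\mathbf{x}_k}=\mathbf{0}$. Alternatively, one can differentiate $\nabla_{\mathbf{x}_k}E_{\text{cf}}=\mathbf{0}$ in $\theta_{\text{cf}}$ and note that $E_{\text{geom}}$ does not depend on $\theta_{\text{cf}}$; symmetry of mixed partials under the $C^2$ assumption makes the order of differentiation irrelevant.

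As a corollary, the $\theta_{\text{cf}}$ coordinate directions lie in the force-preserving subspace $\mathcal{S}_\theta$. The only point needing care is bookkeeping: the parameters must be partitioned as $\theta=(\theta_{\text{geom}},\theta_{\text{cf}})$ with no shared parameters between the two terms. I will state this disjointness explicitly as part of the hypothesis encoded by the notation in \eqref{eq:additive_energy}. Otherwise the argument is immediate.
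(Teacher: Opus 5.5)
Your proposal is correct and follows the paper's own argument: $\nabla_{\mathbf{x}_k}E_{\text{cf}}=\mathbf{0}$ because $E_{\text{cf}}$ has no $\mathbf{x}$-dependence, so the forces come only from $E_{\text{geom}}$, and the mixed derivatives in $\theta_{\text{cf}}$ vanish identically. Your explicit requirement that $\theta_{\text{geom}}$ and $\theta_{\text{cf}}$ be disjoint spells out something the paper leaves implicit: its proof only notes that $\partial^2 E_{\text{cf}}/\partial\theta_{\text{cf}}\partial\mathbf{x}_k$ vanishes, and this gives $\boldsymbol{H}_{\theta_{\text{cf}},\mathbf{x}_k}=\mathbf{0}$ only because $E_{\text{geom}}$ does not depend on $\theta_{\text{cf}}$.
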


\begin{proof}
Because $E_{\text{cf}}$ is independent of $\mathbf{x}$, its partial derivative with respect to any $\mathbf{x}_k$ is zero.
Therefore the forces depend only on $E_{\text{geom}}$, and mixed derivatives
$\frac{\partial^2 E_{\text{cf}}}{(\partial\theta_{\text{cf}}\partial\mathbf{x}_k)}$ vanish identically.
\end{proof}

\begin{proposition}[Rank bound for a one-hidden-layer MLP head]
\label{prop:mlp_rank_bound}
Let the energy head take concatenated inputs
$\mathbf{h} = [\mathbf{h}^{\mathrm{comp}} \| \mathbf{h}^{\mathrm{geom}}]$ with
$\mathbf{h}^{\mathrm{comp}} \in \mathbb{R}^{d_c}$ and
$\mathbf{h}^{\mathrm{geom}} \in \mathbb{R}^{d_g}$, and define
\begin{equation}
E(\mathbf{h}^{\mathrm{comp}}, \mathbf{h}^{\mathrm{geom}})
=
\mathbf{v}^\top \sigma(\mathbf{a}),
\qquad
\mathbf{a}
=
\mathbf{W}_c \mathbf{h}^{\mathrm{comp}}
+
\mathbf{W}_g \mathbf{h}^{\mathrm{geom}}
+
\mathbf{b},
\end{equation}
where $\sigma$ is applied elementwise,
$\mathbf{W}_c \in \mathbb{R}^{m \times d_c}$,
$\mathbf{W}_g \in \mathbb{R}^{m \times d_g}$,
$\mathbf{v}, \mathbf{b} \in \mathbb{R}^m$,
and $m$ is the hidden width.
Assume $\sigma$ is twice differentiable.
Then the cross second derivative gives
\begin{equation}
\frac{\partial^2 E}{\partial \mathbf{h}^{\mathrm{comp}} \, \partial \mathbf{h}^{\mathrm{geom}}}
=
\mathbf{W}_c^\top
\mathbf{D}(\mathbf{a})
\mathbf{W}_g,
\qquad
\mathbf{D}(\mathbf{a})
=
\mathrm{Diag}\!\big(\mathbf{v} \odot \sigma''(\mathbf{a})\big),
\end{equation}
and satisfies the rank bound
\begin{equation}
\mathrm{rank}\!\left(
\frac{\partial^2 E}{\partial \mathbf{h}^{\mathrm{comp}} \, \partial \mathbf{h}^{\mathrm{geom}}}
\right)
\le
\min\!\big\{
\mathrm{rank}(\mathbf{W}_c),
\mathrm{rank}(\mathbf{W}_g),
m
\big\}.
\end{equation}
\end{proposition}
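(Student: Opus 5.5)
The plan is a direct chain-rule computation followed by the standard rank inequality for matrix products. Throughout, write $\mathbf{a}=\mathbf{W}_c\mathbf{h}^{\mathrm{comp}}+\mathbf{W}_g\mathbf{h}^{\mathrm{geom}}+\mathbf{b}$, so that $E=\sum_{j=1}^m v_j\,\sigma(a_j)$.

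\textbf{Step 1: first derivative in $\mathbf{h}^{\mathrm{geom}}$.} Since $\partial a_j/\partial \mathbf{h}^{\mathrm{geom}}$ is the $j$-th row of $\mathbf{W}_g$, we get
\[
\nabla_{\mathbf{h}^{\mathrm{geom}}}E=\mathbf{W}_g^\top\big(\mathbf{v}\odot\sigma'(\mathbf{a})\big).
\]

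\textbf{Step 2: differentiate in $\mathbf{h}^{\mathrm{comp}}$.} The matrix $\mathbf{W}_g^\top$ is constant. The $j$-th entry $v_j\sigma'(a_j)$ has gradient $v_j\sigma''(a_j)\,(\mathbf{W}_c)_{j,:}^\top$ with respect to $\mathbf{h}^{\mathrm{comp}}$. Assembling these entries gives the mixed second-derivative block $\mathbf{W}_g^\top\mathbf{D}(\mathbf{a})\mathbf{W}_c$, whose transpose is $\mathbf{W}_c^\top\mathbf{D}(\mathbf{a})\mathbf{W}_g$, because $\mathbf{D}(\mathbf{a})=\mathrm{Diag}(\mathbf{v}\odot\sigma''(\mathbf{a}))$ is diagonal and hence symmetric. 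One fixes the layout convention so that $\partial^2E/\partial\mathbf{h}^{\mathrm{comp}}\partial\mathbf{h}^{\mathrm{geom}}$ is the $d_c\times d_g$ matrix with entries $\partial^2E/\partial h^{\mathrm{comp}}_p\partial h^{\mathrm{geom}}_q$. Entrywise this reads
\[
\sum_j (\mathbf{W}_c)_{jp}\,v_j\sigma''(a_j)\,(\mathbf{W}_g)_{jq},
\]
which matches the stated formula exactly. Twice differentiability of $\sigma$ is the only hypothesis used here.

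\textbf{Step 3: rank bound.} Apply $\mathrm{rank}(\mathbf{A}\mathbf{B})\le\min\{\mathrm{rank}\,\mathbf{A},\mathrm{rank}\,\mathbf{B}\}$ twice to the triple product. This gives a bound by $\min\{\mathrm{rank}(\mathbf{W}_c^\top),\mathrm{rank}(\mathbf{D}),\mathrm{rank}(\mathbf{W}_g)\}$. Then use $\mathrm{rank}(\mathbf{W}_c^\top)=\mathrm{rank}(\mathbf{W}_c)$ and $\mathrm{rank}(\mathbf{D})\le m$, since $\mathbf{D}$ is $m\times m$.

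One can note the slightly sharper bound $\mathrm{rank}(\mathbf{D})=\#\{j: v_j\sigma''(a_j)\neq 0\}$. For the stated result, however, $m$ suffices.

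\textbf{Main obstacle.} There is no real obstacle. The only care needed is bookkeeping of the Jacobian layout convention in Step 2, so that the product appears as $\mathbf{W}_c^\top\mathbf{D}\mathbf{W}_g$ rather than its transpose; transposition does not affect rank in any case.
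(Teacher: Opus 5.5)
Your proposal is correct and follows essentially the same route as the paper: a chain-rule computation giving $\mathbf{W}_c^\top\mathbf{D}(\mathbf{a})\mathbf{W}_g$, followed by $\mathrm{rank}(\mathbf{A}\mathbf{B}\mathbf{C})\le\min\{\mathrm{rank}\,\mathbf{A},\mathrm{rank}\,\mathbf{B},\mathrm{rank}\,\mathbf{C}\}$. The only difference is that you differentiate in $\mathbf{h}^{\mathrm{geom}}$ first and then transpose, while the paper differentiates in $\mathbf{h}^{\mathrm{comp}}$ first; your explicit entrywise formula makes the two orders agree, so this is immaterial.
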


\begin{proof}
Applying the chain rule,
\[
\frac{\partial E}{\partial \mathbf{h}^{\mathrm{comp}}}
=
\mathbf{W}_c^\top \big(\mathbf{v} \odot \sigma'(\mathbf{a})\big).
\]
Differentiating with respect to $\mathbf{h}^{\mathrm{geom}}$ yields
\[
\frac{\partial^2 E}{\partial \mathbf{h}^{\mathrm{comp}} \, \partial \mathbf{h}^{\mathrm{geom}}}
=
\mathbf{W}_c^\top
\mathrm{Diag}\!\big(\mathbf{v} \odot \sigma''(\mathbf{a})\big)
\mathbf{W}_g.
\]
The rank bound follows from
$\mathrm{rank}(\mathbf{A}\mathbf{B}\mathbf{C})
\le
\min\{\mathrm{rank}(\mathbf{A}), \mathrm{rank}(\mathbf{B}), \mathrm{rank}(\mathbf{C})\}$.
\end{proof}

\begin{proposition}[Concatenated streams on force-preserving subspace]
\label{prop:rank_to_kernel}
Let $\theta_{\mathrm{comp}}$ denote parameters that influence the energy only
through $\mathbf{h}^{\mathrm{comp}}$, and define the stacked mixed Hessian
\[
\boldsymbol{\widetilde H}_{\mathrm{comp}}
=
\begin{bmatrix}
\frac{\partial^2 E}{\partial \theta_{\mathrm{comp}} \, \partial \mathbf{x}_1}^\top \\
\vdots \\
\frac{\partial^2 E}{\partial \theta_{\mathrm{comp}} \, \partial \mathbf{x}_N}^\top
\end{bmatrix}.
\]
If the feature-level Hessian
\(
\frac{\partial^2 E}{\partial \mathbf{h}^{\mathrm{comp}} \, \partial \mathbf{h}^{\mathrm{geom}}}
\)
has rank at most $r$, then
\begin{equation}
\mathrm{rank}(\boldsymbol{\widetilde {H}}_{\mathrm{comp}}) \;\le\; r,
\qquad
\dim\!\big(\mathrm{Null}(\boldsymbol{\widetilde {H}}_{\mathrm{comp}})\big)
\;\ge\;
\dim(\theta_{\mathrm{comp}}) - r.
\end{equation}
\end{proposition}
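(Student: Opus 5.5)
The plan is a chain-rule factorization of the stacked mixed Hessian through the feature-level cross Hessian, followed by rank--nullity. Two structural facts drive it. First, $\theta_{\mathrm{comp}}$ affects $E$ only through $\mathbf{h}^{\mathrm{comp}}$. Second, $\mathbf{h}^{\mathrm{comp}}$ is coordinate-free: the composition stream sees only $\{(z_t,c_t)\}$, so $\partial \mathbf{h}^{\mathrm{comp}}/\partial \mathbf{x}_k=\mathbf{0}$, exactly as in Lemma~\ref{lem:additive_decoupling}. All position dependence therefore enters through $\mathbf{h}^{\mathrm{geom}}(\mathbf{x},\mathbf{z})$, i.e.\ the structure and interaction streams, which carry no $\theta_{\mathrm{comp}}$-dependence.

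\textbf{Step 1: first derivative in $\theta_{\mathrm{comp}}$.} Write $E=\widehat E(\mathbf{h}^{\mathrm{comp}}(\theta_{\mathrm{comp}}),\mathbf{h}^{\mathrm{geom}}(\mathbf{x}))$ and set $\mathbf{C}:=\partial \mathbf{h}^{\mathrm{comp}}/\partial\theta_{\mathrm{comp}}\in\mathbb{R}^{d_c\times \dim\theta_{\mathrm{comp}}}$, which does not depend on $\mathbf{x}$. The chain rule gives
\[
\frac{\partial E}{\partial\theta_{\mathrm{comp}}}=\mathbf{C}^\top \nabla_{\mathbf{h}^{\mathrm{comp}}}\widehat E .
\]

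\textbf{Step 2: differentiate in $\mathbf{x}_k$.} Only the second factor depends on $\mathbf{x}_k$, and it does so only through $\mathbf{h}^{\mathrm{geom}}$. Let $\mathbf{M}:=\partial^2 \widehat E/\partial \mathbf{h}^{\mathrm{comp}}\partial \mathbf{h}^{\mathrm{geom}}\in\mathbb{R}^{d_c\times d_g}$ and $\mathbf{G}_k:=\partial \mathbf{h}^{\mathrm{geom}}/\partial \mathbf{x}_k\in\mathbb{R}^{d_g\times 3}$. Then
\[
\frac{\partial^2 E}{\partial\theta_{\mathrm{comp}}\,\partial\mathbf{x}_k}=\mathbf{C}^\top\mathbf{M}\,\mathbf{G}_k .
\]
Transposing and stacking over $k$ gives
\[
\widetilde{\boldsymbol H}_{\mathrm{comp}}=\mathbf{G}^\top\mathbf{M}^\top\mathbf{C},\qquad \mathbf{G}:=[\mathbf{G}_1\;\cdots\;\mathbf{G}_N]\in\mathbb{R}^{d_g\times 3N}.
\]
The $C^2$ assumption stated before Proposition~\ref{prop:grad_coupling} justifies exchanging the order of differentiation.

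\textbf{Step 3: conclude.} Using $\mathrm{rank}(\mathbf{ABC})\le\mathrm{rank}(\mathbf{B})$, as in the proof of Proposition~\ref{prop:mlp_rank_bound}, we get $\mathrm{rank}(\widetilde{\boldsymbol H}_{\mathrm{comp}})\le\mathrm{rank}(\mathbf{M})\le r$. The matrix $\widetilde{\boldsymbol H}_{\mathrm{comp}}$ has $\dim\theta_{\mathrm{comp}}$ columns, so rank--nullity gives $\dim\mathrm{Null}(\widetilde{\boldsymbol H}_{\mathrm{comp}})\ge\dim(\theta_{\mathrm{comp}})-r$. I will also remark that this null space lies in $\mathcal{S}_\theta$ restricted to $\theta_{\mathrm{comp}}$-directions. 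With Proposition~\ref{prop:mlp_rank_bound}, $r$ can be taken to be $\min\{\mathrm{rank}\,\mathbf{W}_c,\mathrm{rank}\,\mathbf{W}_g,m\}$.

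\textbf{Main obstacle.} The delicate point is making precise what $\mathbf{h}^{\mathrm{comp}}$ and $\mathbf{h}^{\mathrm{geom}}$ are when the energy is a sum of node-level head outputs, $E=\sum_i \widehat E(\mathbf{h}^{\mathrm{comp}}_i,\mathbf{h}^{\mathrm{geom}}_i)$. In that case I would stack all node features into single vectors. The feature Hessian $\mathbf{M}$ is then block-diagonal with blocks $\mathbf{M}_i$, and the hypothesis "rank at most $r$" must refer to this stacked Hessian. Step 2 then reads $\sum_i \mathbf{C}_i^\top\mathbf{M}_i\mathbf{G}_{i,k}$, and the same factorization applies with the stacked $\mathbf{C}$ and $\mathbf{G}$. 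I would state this interpretation explicitly rather than claim a per-node $r$ suffices, since per-node blocks could add ranks up to $Nr$. The other point to check is the coordinate-freeness of $\mathbf{h}^{\mathrm{comp}}$. Broadcasting graph-level composition features to atoms preserves it, but any fusion that lets $\mathbf{x}$ feed back into the composition stream would break Step 2.
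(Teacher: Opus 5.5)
Your proof is correct and takes the same route as the paper: you factor each block by the chain rule as $\mathbf{C}^\top\mathbf{M}\,\mathbf{G}_k$, bound its rank by that of the feature-level cross Hessian, and finish with rank--nullity. Your explicit stacking $\widetilde{\boldsymbol H}_{\mathrm{comp}}=\mathbf{G}^\top\mathbf{M}^\top\mathbf{C}$ is the precise way to pass from blocks to the stacked matrix; the paper's ``true for all $k$, hence'' step only works because $\mathbf{M}^\top\mathbf{C}$ is a common right factor, since per-block bounds alone would give $Nr$. Your coordinate-freeness and node-level caveats are assumptions the paper uses implicitly.
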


\begin{proof}
By the chain rule,
\[
\forall \;k \in [1,N], \quad
\frac{\partial^2 E}{\partial \theta_{\mathrm{comp}} \, \partial \mathbf{x}_k}
=
\left(
\frac{\partial \mathbf{h}^{\mathrm{comp}}}{\partial \theta_{\mathrm{comp}}}
\right)^\top
\frac{\partial^2 E}{\partial \mathbf{h}^{\mathrm{comp}} \, \partial \mathbf{h}^{\mathrm{geom}}}
\frac{\partial \mathbf{h}^{\mathrm{geom}}}{\partial \mathbf{x}_k}.
\]
Each block is therefore a product of matrices whose middle factor has rank at most $r$.
This is true for all $k$, hence
$\mathrm{rank}(\boldsymbol{\widetilde H}_{\mathrm{comp}})\le r$.
The kernel bound follows from rank--nullity.
\end{proof}

\textbf{Application to TriForces.}
TriForces introduces a composition stream $\mathbf{h}^{\text{comp}}$ that is coordinate-free by construction.
Lemma~\ref{lem:additive_decoupling} identifies a sufficient mechanism where such a stream can reduce energy and force coupling: if the downstream energy head learns an approximately additive coordinate-free component of the form~\eqref{eq:additive_energy}, then the associated parameters admit force-preserving update directions.

While this additive structure is not guaranteed, the TriForces architecture biases optimization toward such solutions by (i) explicitly exposing coordinate-free features through $\mathbf{h}^{\text{comp}}$, and (ii) using a shallow fusion head, which empirically exhibits low cross-sensitivity between compositional and geometric features.
More generally, for an arbitrary fusion head, the influence of composition parameters on forces is governed by the stacked mixed Hessian
\[
\boldsymbol{\widetilde{H}}_{\text{comp}}
=
\begin{bmatrix}
\boldsymbol{H}_{\theta_{\text{comp}},\mathbf{x}_1}^\top\\ \vdots\\ \boldsymbol{H}_{\theta_{\text{comp}},\mathbf{x}_N}^\top
\end{bmatrix}.
\]
Smaller operator norm or lower effective rank of $\boldsymbol{\widetilde{H}}_{\text{comp}}$ implies a larger force-preserving subspace via~\eqref{eq:first_order_force_change}. This rank is bounded by the MLP hidden width and depends
   on both the learned decoder weights and the input.

\section{Additional Experiments}
\label{app:ablations}

\subsection{MLIPs}
\label{app:mlips}
Figure~\ref{fig:loss_improv} shows OMat24 validation curves for energy, forces, and stress. TriForces-Streams reduces energy error early without impacting forces learning, while full TriForces improves all three metrics throughout training and transfers these gains to a validation split curated from MPtrj.

\begin{figure}[h]
\centering
\includegraphics[width=0.85\textwidth]{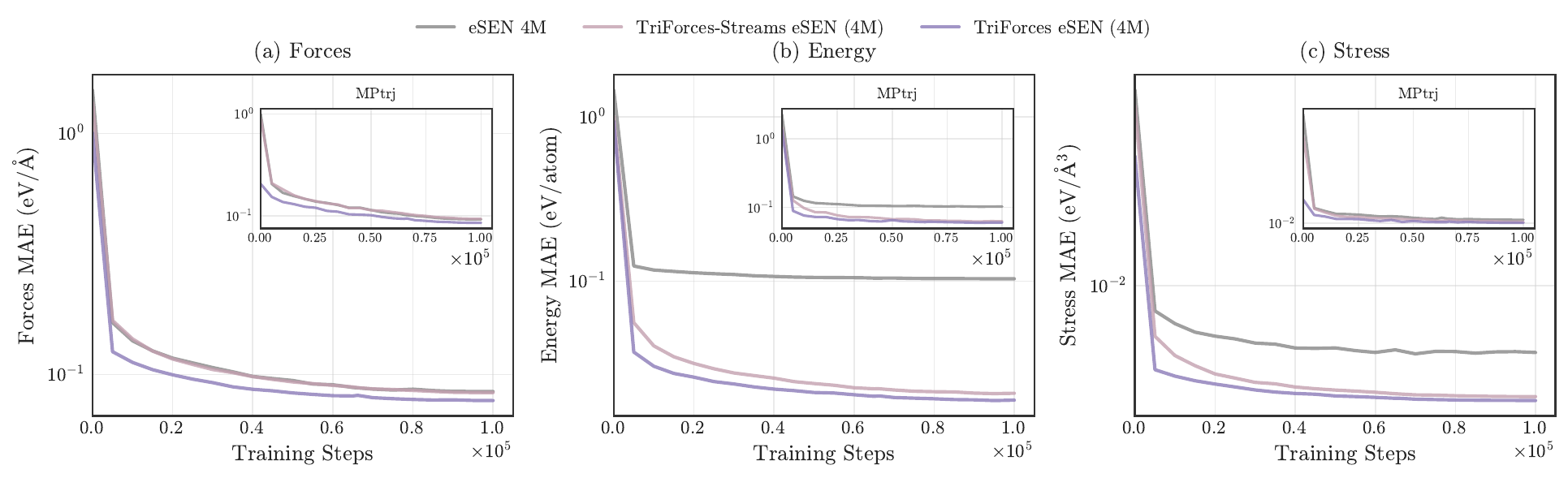}
\caption{Validation curves during eSEN fine-tuning on 4M OMat24 samples, evaluated on OMat24 val and a 10K MPtrj split (inset). Curves show energy, force, and stress MAE over training. TriForces-Streams lowers energy early and full TriForces improves all three metrics, with gains transferring to MPtrj.}
\label{fig:loss_improv}
\end{figure}

\subsection{Data Efficiency}
\label{app:data_efficiency}

Table~\ref{tab:data_efficiency} reports fine-tuning MAE of direct eSEN variants across dataset sizes. TriForces yields the largest relative gains in the low-data regime (20K to 100K samples) and maintains improvements as data increases, indicating better sample efficiency rather than only higher capacity. All the experiments were run following the hyperparameters in App.~\ref{tab:app_hyperparams_finetune}, but for 50K iteration steps.

\begin{table}[h]
\centering
\caption{Data efficiency on OMat24 for direct eSEN variants across dataset sizes (20K--2M samples). Energy and force MAE ($\downarrow$) are reported; \textbf{Bold} = best and \underline{underline} = second best. TriForces achieves the lowest errors at every size, with the largest relative gains in the 20K--100K sample regime.}
\label{tab:data_efficiency}
\resizebox{.85\textwidth}{!}{
\begin{tabular}{l|ccc|ccc}
\toprule
& \multicolumn{3}{c|}{Energy MAE (meV/atom)} & \multicolumn{3}{c}{Forces MAE (meV/\AA)} \\
Samples & eSEN Base & + TriForces-Streams & TriForces & eSEN Base & + TriForces-Streams & TriForces \\
\midrule
20K & 81.3 & \underline{42.8} & \textbf{34.6} & 151.4 & \underline{145.8} & \textbf{121.8} \\
100K & 57.5 & \underline{25.7} & \textbf{23.0} & 107.4 & \underline{107.0} & \textbf{101.3} \\
500K & 54.0 & \underline{22.0} & \textbf{21.2} & 100.8 & \underline{99.7} & \textbf{97.7} \\
1M & 52.7 & \underline{21.0} & \textbf{20.9} & 99.2 & \underline{98.3} & \textbf{97.4} \\
2M & 53.0 & \underline{20.8} & \textbf{20.8} & 98.8 & \underline{97.9} & \textbf{97.2} \\
\bottomrule
\end{tabular}
}
\end{table}

\subsection{SSL vs. Supervised Pretraining}
\label{app:ssl_vs_supervised}

Table~\ref{tab:ssl_vs_supervised} compares pretraining objectives and pretraining-data scale before downstream OMat24 fine-tuning. In this controlled setting, SSL pretraining gives a better downstream initialization than supervised pretraining on LeMat-Bulk, especially for forces. Increasing unlabeled OMat pretraining data also improves downstream performance, indicating that representation quality continues to benefit from scale.

\begin{table}[h]
\caption{Effect of pretraining objective and scale on downstream OMat24 fine-tuning. Energy is reported in meV/atom and forces in meV/\AA.}
\label{tab:ssl_vs_supervised}
\centering
\resizebox{.6\textwidth}{!}{%
\begin{tabular}{lcccc}
\toprule
\textbf{Pretraining} & Orb $E$ $\downarrow$ & Orb $F$ $\downarrow$ & eSEN $E$ $\downarrow$ & eSEN $F$ $\downarrow$ \\
\midrule
Supervised LeMat-Bulk & -- & -- & 22.5 & 106.0 \\
SSL LeMat-Bulk & -- & -- & 18.8 & 86.8 \\
SSL OMat-1M & 21.9 & 112.5 & 21.1 & 94.2 \\
SSL OMat-10M & 19.8 & 103.6 & 20.4 & 93.8 \\
SSL OMat-100M & 18.9 & 99.5 & 18.7 & 87.1 \\
\bottomrule
\end{tabular}%
}
\end{table}

\subsection{MatBench Discovery}
\label{app:matbench_discovery}

Table~\ref{tab:matbench_discovery_appendix} reports metrics for both the unique-prototype and full test sets. TriForces Orb-v3 improves MAE and RMSD relative to Orb-v3 across splits, while other metrics remain comparable. TriForces eSEN-sm remains close to the larger eSEN-30M-OAM on most metrics; MAE and RMSD differ by about 0.001 (eV/atom and \AA{}, respectively) and $\kappa_{\text{SRME}}$ is slightly higher (0.206 vs 0.170). Our TriForces Orb-v3 used a maximum neighbour count of 40 versus 120 for Orb-v3

\begin{table}[t]
\caption{MatBench Discovery full metrics for unique-prototype and full test sets. Prec: precision; F1: F1 score; DAF: discovery acceleration factor; Acc: accuracy; MAE: mean absolute error (eV/atom); $R^2$: coefficient of determination; $\kappa_\text{SRME}$: thermal conductivity scaled relative mean error; RMSD: root mean square displacement (\AA). TriForces Orb-v3 improves MAE/RMSD vs Orb-v3 on both splits, and TriForces eSEN-sm remains close to eSEN-30M-OAM despite being much smaller. Note: TriForces Orb-v3 uses max neighbors=40 vs 120 for Orb-v3, so $\kappa_\text{SRME}$ is not directly comparable.}
\label{tab:matbench_discovery_appendix}
\centering
\resizebox{.8\textwidth}{!}{
\begin{tabular}{llcccccccc}
\toprule
Model & Test Set & Prec & F1 & DAF & Acc & MAE & $R^2$ & $\kappa$SRME & RMSD \\
\midrule
eSEN-30M-OAM & Unique & 0.977 & 0.925 & 6.07 & 0.977 & 0.018 & 0.866 & 0.170 & 0.061 \\
 & Full & 0.967 & 0.902 & 5.28 & 0.967 & 0.018 & 0.860 & 0.170 & 0.061 \\
\midrule
TriForces eSEN-sm & Unique & 0.909 & 0.915 & 5.94 & 0.974 & 0.019 & 0.874 & 0.206 & 0.062 \\
 & Full & 0.882 & 0.894 & 5.29 & 0.964 & 0.019 & 0.861 & 0.206 & 0.062 \\
\midrule
Orb-v3 & Unique & 0.971 & 0.905 & 5.91 & 0.971 & 0.024 & 0.821 & 0.210 & 0.075 \\
 & Full & 0.961 & 0.887 & 5.16 & 0.961 & 0.023 & 0.820 & 0.210 & 0.075 \\
\midrule
TriForces Orb-v3 & Unique & 0.903 & 0.910 & 5.91 & 0.972 & 0.020 & 0.874 & 0.460 & 0.065 \\
 & Full & 0.877 & 0.889 & 5.26 & 0.962 & 0.020 & 0.861 & 0.460 & 0.064 \\
\bottomrule
\end{tabular}%
}
\end{table}

\subsection{QM9}
\label{app:qm9}

Table~\ref{tab:qm9_ssl_full} shows the complete QM9 results. Pretraining on OMol yields the lowest MAE across all reported targets, while bulk-only pretraining provides smaller gains, indicating that molecular pretraining data is important for QM9 transfer.

\begin{table*}[t]
\caption{Full QM9 results on all 12 targets. eSEN: no SSL pretraining. TriForces eSEN (Bulk): TriForces pretrained on crystal bulks. TriForces eSEN (OMol): TriForces pretrained on mix of bulks, slabs, and molecules. MAE ($\downarrow$). \textbf{Bold}: best; \underline{underline}: second best.}
\label{tab:qm9_ssl_full}
\centering
\resizebox{\textwidth}{!}{%
\begin{tabular}{lcccccccccccc}
\toprule
\textbf{Model} & $\mu$ & $\alpha$ & $\varepsilon_{\mathrm{HOMO}}$ & $\varepsilon_{\mathrm{LUMO}}$ & $\Delta\varepsilon$ & $\langle R^2 \rangle$ & ZPVE & $U_0$ & $U$ & $H$ & $G$ & $C_v$ \\
 & (D) & ($a_0^3$) & (meV) & (meV) & (meV) & ($a_0^2$) & (meV) & (meV) & (meV) & (meV) & (meV) & ($\frac{\text{cal}}{\text{mol K}}$) \\
\midrule
eSEN & 0.023 & 0.098 & 23.6 & 25.1 & 45.1 & 0.750 & 3.49 & 9.0 & \textbf{9.0} & \underline{9.0} & 9.5 & 0.037 \\
TriForces eSEN (Bulk) & 0.021 & 0.093 & 23.7 & 25.2 & 47.3 & 0.754 & 3.61 & 9.3 & 9.3 & 9.4 & 9.7 & 0.041 \\
TriForces eSEN (All) & \underline{0.018} & \underline{0.079} & \underline{21.0} & \underline{21.0} & \underline{40.0} & \textbf{0.540} & \textbf{2.72} & \textbf{7.6} & 9.1 & \textbf{7.8} & \underline{8.3} & \underline{0.032} \\
TriForces eSEN (OMol) & \textbf{0.018} & \textbf{0.075} & \textbf{20.2} & \textbf{20.4} & \textbf{38.8} & \underline{0.545} & \underline{2.73} & \underline{8.9} & \textbf{9.0} & \underline{9.0} & \textbf{8.0} & \textbf{0.031} \\
\bottomrule
\end{tabular}%
}
\end{table*}

\subsection{Full Ablation on Orb-v3}
\label{app:ssl_ablation_full}

This appendix provides complete ablation results summarized in the main text. Table~\ref{tab:ssl_ablation_full} shows all pretraining configurations tested on Orb-v3, pretrained for 100K steps on bulks and fine-tuned on OMat24.
The full table confirms the trends in the main ablation: removing compositional or structural streams degrades the corresponding probes and downstream energy/force metrics, and LeJEPA-only training performs poorly on fine-tuning. Denoising is most impactful for non-equivariant Orb-v3, while masking provides smaller but consistent gains. The OMat24 and MPtrj columns track each other closely, indicating that the ablation effects are consistent across transfer targets.

\begin{table*}[t]
\caption{Full ablation study on Orb-v3 with all metrics. See Sec.~\ref{sec:exp_analysis} for discussion.}
\label{tab:ssl_ablation_full}
\centering
\small

\begin{tabular}{@{}l cc ccc cccc ccccc@{}}
\toprule
 & \multicolumn{2}{c}{\textbf{Streams}} & \multicolumn{3}{c}{\textbf{SSL}} & \multicolumn{4}{c}{\textbf{Probing}} & \multicolumn{5}{c}{\textbf{Fine-tuning}} \\
\cmidrule(lr){2-3} \cmidrule(lr){4-6} \cmidrule(lr){7-10} \cmidrule(lr){11-15}
\textbf{Model} & C & S & D & M & J & $d_{\text{NN}}$ & $E_f$ & Cry & Uni & $E_O$ & $F_O$ & $\sigma_O$ & $E_M$ & $F_M$ \\
\midrule
TriForces & \cmark & \cmark & \cmark & \cmark & \cmark & 0.026 & 0.121 & 61.7 & -2.6 & 0.023 & 0.092 & 0.0047 & 0.062 & 0.084 \\
\midrule
w/o Comp & \xmark & \cmark & \cmark & \cmark & \cmark & 0.030 & 0.152 & 68.5 & -2.3 & 0.026 & 0.112 & 0.0051 & 0.091 & 0.122 \\
w/o Struct & \cmark & \xmark & \cmark & \cmark & \cmark & 0.034 & 0.116 & 59.0 & -0.9 & 0.022 & 0.092 & 0.0047 & 0.062 & 0.083 \\
w/o Streams & \xmark & \xmark & \cmark & \cmark & \cmark & 0.035 & 0.162 & 57.2 & -1.2 & 0.095 & 0.098 & 0.0063 & 0.101 & 0.089 \\
\midrule
w/o LeJEPA & \cmark & \cmark & \cmark & \cmark & \xmark & 0.029 & 0.108 & 59.8 & -0.6 & 0.022 & 0.094 & 0.0048 & 0.063 & 0.087 \\
w/o Mask & \cmark & \cmark & \cmark & \xmark & \cmark & 0.030 & 0.129 & 71.7 & -1.9 & 0.020 & 0.101 & 0.0049 & 0.080 & 0.108 \\
LeJEPA only & \cmark & \cmark & \xmark & \xmark & \cmark & 0.038 & 0.157 & 68.4 & -2.0 & 0.033 & 0.156 & 0.0061 & 0.118 & 0.162 \\
\midrule
Barlow & \cmark & \cmark & \cmark & \cmark & \tiny{BT} & 0.027 & 0.115 & 75.6 & -2.3 & 0.022 & 0.098 & 0.0048 & 0.073 & 0.102 \\
Crystal Twins & \cmark & \cmark & \cmark & \cmark & \tiny{CT} & 0.055 & 0.304 & 57.7 & -0.1 & 0.043 & 0.181 & 0.0069 & 0.117 & 0.208 \\
De-Noise & \xmark & \xmark & \tiny{Z} & \xmark & \xmark & 0.038 & 0.144 & 55.6 & -0.9 & 0.095 & 0.095 & 0.0065 & 0.102 & 0.094 \\
\midrule
LeMat-Traj & \cmark & \cmark & \cmark & \cmark & \cmark & 0.031 & 0.123 & 68.2 & -2.0 & 0.020 & 0.102 & 0.0049 & 0.079 & 0.111 \\
\bottomrule
\end{tabular}
\begin{flushleft}
\footnotesize
C/S/D/M/J = Comp/Struct/Denoise/Mask/JEPA. $d_{\text{NN}}$: nearest neighbor distance (\AA). $E_f$: formation energy (eV/at). Cry: crystal system (\%). Uni: uniformity~\cite{DBLP:conf/icml/0001I20}. $E$/$F$/$\sigma$: energy/force/stress MAE. Subscripts O/M = OMAT/MPTRJ. BT = Barlow Twins, Z = Zaidi.
\end{flushleft}
\end{table*}

\subsection{Parameter-matched Baselines}
\label{app:param_match}
To rule out that TriForces gains are simply from additional parameters, we compare against \emph{plain single-stream} models with matched capacity by widening the interaction stream to match the TriForces parameter count (within a small tolerance), while keeping data, objectives, and training schedules fixed. For eSEN, we double edge channels and spherical channels; for Orb, we set the latent dimension to 512 and increase message passing steps to 6 (from 5). These parameter-matched baselines are more expensive per step because added capacity is placed in the interaction stream. We report MatBench and QM9 results for eSEN and Orb in the tables below; TriForces remains noticeably better, indicating improvements are not solely due to parameter count.

\begin{table}[t]
\caption{Parameter-matched comparison: TriForces models vs. base models with matched parameter count. MAE ($\downarrow$) / F1 ($\uparrow$), fold 0 only. \textbf{Bold}: best within backbone.}
\label{tab:matbench_param_matched}
\centering
\resizebox{.85\textwidth}{!}{%
\begin{tabular}{l|cc|cc}
\toprule
\textbf{Task (Units)} & \textbf{eSEN (Matched)} & \textbf{TriForces eSEN} & \textbf{Orb (Matched)} & \textbf{TriForces Orb} \\
\midrule
Phonons (cm$^{-1}$) & 33.3 & \textbf{22.4} & 34.6 & \textbf{30.7} \\
Dielectric & 0.157 & \textbf{0.122} & 0.174 & \textbf{0.126} \\
Log GVRH (log$_{10}$(GPa)) & 0.074 & \textbf{0.058} & 0.078 & \textbf{0.057} \\
Log KVRH (log$_{10}$(GPa)) & 0.048 & \textbf{0.042} & 0.057 & \textbf{0.043} \\
Perovskites (meV) & 33.2 & \textbf{25.8} & 32.0 & \textbf{26.0} \\
MP Gap (eV) & 0.309 & \textbf{0.159} & 0.352 & \textbf{0.131} \\
MP E Form (meV/atom) & 35.8 & \textbf{20.3} & 27.2 & \textbf{17.2} \\
MP Is Metal (F1) & 0.831 & \textbf{0.856} & 0.822 & \textbf{0.903} \\
\bottomrule
\end{tabular}%
}
\end{table}

\begin{table}[t]
\caption{Parameter-matched comparison on QM9: TriForces eSEN vs. eSEN with matched parameter count. MAE ($\downarrow$). \textbf{Bold}: best.}
\label{tab:qm9_param_matched}
\centering
\begin{tabular}{lcccccccc}
\toprule
\textbf{Model} & $\mu$ & $\alpha$ & $\varepsilon_{\mathrm{HOMO}}$ & $\varepsilon_{\mathrm{LUMO}}$ & $\Delta\varepsilon$ & $C_v$ & $U_0$ \\
 & (D) & ($a_0^3$) & (meV) & (meV) & (meV) & ($\frac{\text{cal}}{\text{mol K}}$) & (meV) \\
\midrule
eSEN (Matched) & 0.020 & 0.091 & 22.4 & 23.1 & 41.9 & 0.032 & \textbf{8.3} \\
TriForces eSEN & \textbf{0.018} & \textbf{0.075} & \textbf{20.2} & \textbf{20.4} & \textbf{38.8} & \textbf{0.031} & 8.9 \\
\bottomrule
\end{tabular}%
\vspace{0.2em}
\begin{flushleft}
\end{flushleft}
\end{table}

\section{Analysis of Learned Representations}
\label{app:retrieval}

\subsection{Probing}
\label{app:probing}

To assess whether pretrained representations retain fundamental structural and compositional information, we evaluate linear probing on frozen embeddings for four diagnostic tasks that capture basic geometric and chemical properties:

\begin{itemize}[noitemsep,topsep=0pt]
    \item \textbf{Crystal system} (7-class): The symmetry category of the crystal lattice (e.g., cubic, hexagonal, tetragonal), determined purely by geometry.
    \item \textbf{Majority element} (118-class): The most frequently occurring element in the structure's composition.
    \item \textbf{Coordination number}: The average number of nearest neighbors per atom, calculated using the effective coordination number (ECoN)~\cite{hoppe1979effective}.
    \item \textbf{Mean nearest-neighbor distance}: The average distance to each atom's closest neighbor in \AA ngstroms, capturing local atomic spacing.
\end{itemize}

These tasks require no DFT labels and test whether basic geometric and chemical information remains accessible in the learned representations.

\begin{figure}[h]
\centering
\includegraphics[width=0.85\textwidth]{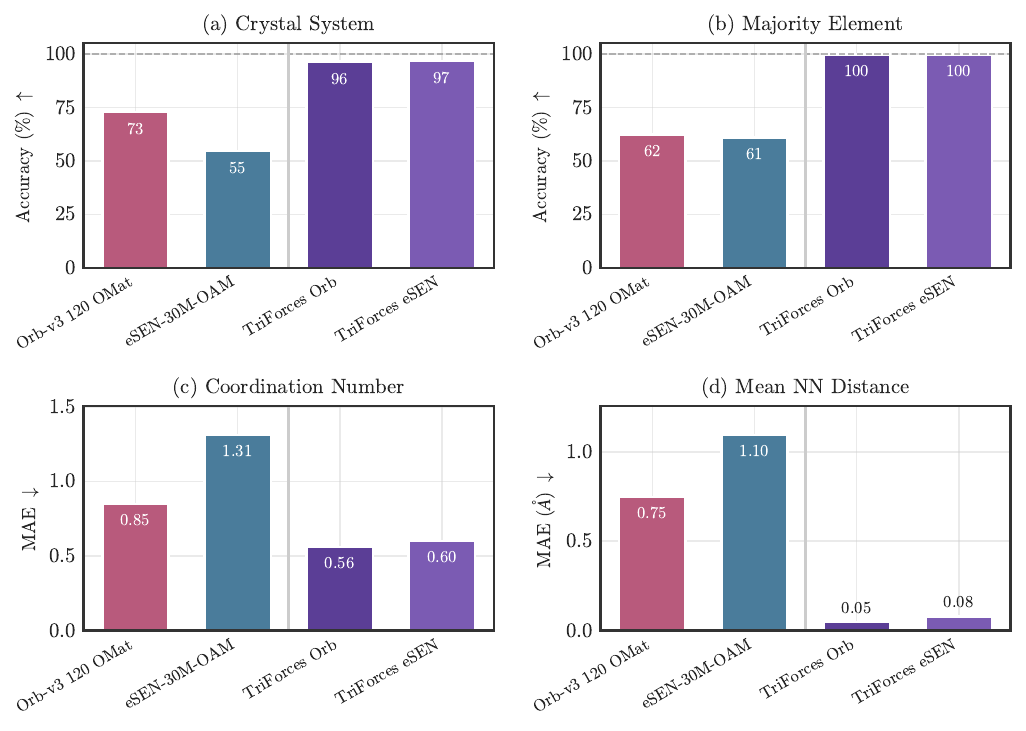}
\caption{Probing on frozen embeddings with MLP heads for (a) crystal system, (b) majority element, (c) coordination number, and (d) mean nearest-neighbor distance. Accuracies are shown for (a,b) and MAE for (c,d). Fully trained MLIP variants reach only 55--73\% and 61--62\% accuracy, while TriForces reaches 96--100\% and reduces coordination/NN-distance MAE by 34\%/94\%.}
\label{fig:probing_appendix}
\end{figure}

Figure~\ref{fig:probing_appendix} reveals that baseline MLIPs struggle on these seemingly simple tasks despite being pretrained on over 100M structures with only at most 62\% accuracy in retrieving a seemingly simple property as the maximal chemical element. This indicates that supervised training on energies and forces discards or hides information not directly needed for those targets, but needed for describing a chemical system.

TriForces variants achieve near-perfect classification (96--100\%) and substantially lower regression error. The mean NN distance MAE improves from 0.75--1.10~\AA{} to 0.05--0.08~\AA{}. These results confirm that the three-stream architecture preserves compositional and structural factors, making them directly accessible.

\subsection{Latent Space Visualization}
\label{app:latent}

\begin{figure}[h]
\centering
\includegraphics[width=0.9\textwidth]{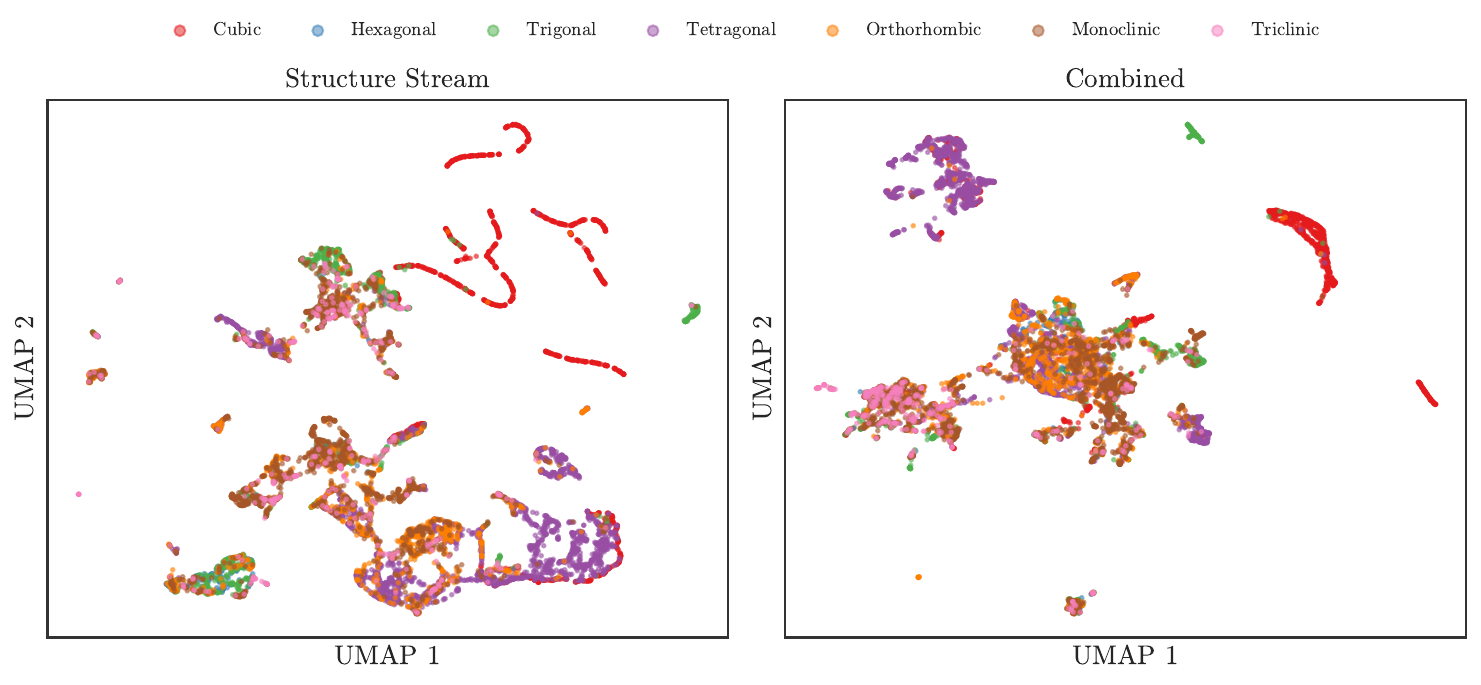}

\caption{UMAP visualization of embeddings. Single-stream models entangle composition and structure, while TriForces separates them: $\mathbf{h}^{\text{comp}}$ clusters by chemistry and $\mathbf{h}^{\text{struct}}$ clusters by crystal system. Stream decomposition yields a more interpretable latent structure.}
\label{fig:app_tsne}
\end{figure}

Figure~\ref{fig:app_tsne} illustrates the resulting separation of chemistry and structure in the embedding space.

\subsection{Qualitative Retrieval Comparison}
\label{app:retrieval_qualitative}

We visualize nearest neighbors for query structures using TriForces and baseline embeddings. TriForces yields interpretable neighborhoods where composition- and structure-based similarity are separated, while baseline embeddings trained only on energies and forces tend to conflate the two factors. Figure~\ref{fig:retrieval_qualitative} shows a representative example.

\begin{table}[h]
\caption{Retrieval baselines at R@10. TriForces uses the composition stream for element-set retrieval and the structure stream for space-group retrieval.}
\label{tab:retrieval_baselines}
\centering
\resizebox{.5\textwidth}{!}{%
\begin{tabular}{lcc}
\toprule
\textbf{Method} & Element Set R@10 & Space Group R@10 \\
\midrule
SOAP & 0.814 & 0.103 \\
SineMatrix & 0.041 & 0.221 \\
eSEN 30M OAM & 0.184 & 0.207 \\
Orb-v3 OMat & 0.179 & 0.413 \\
TriForces eSEN & 0.770 & 0.648 \\
TriForces Orb-v3 & 0.774 & 0.642 \\
\bottomrule
\end{tabular}%
}
\end{table}

\begin{figure}
    \centering
    \begin{minipage}{.48\textwidth}
        \centering
        \includegraphics[width=\linewidth]{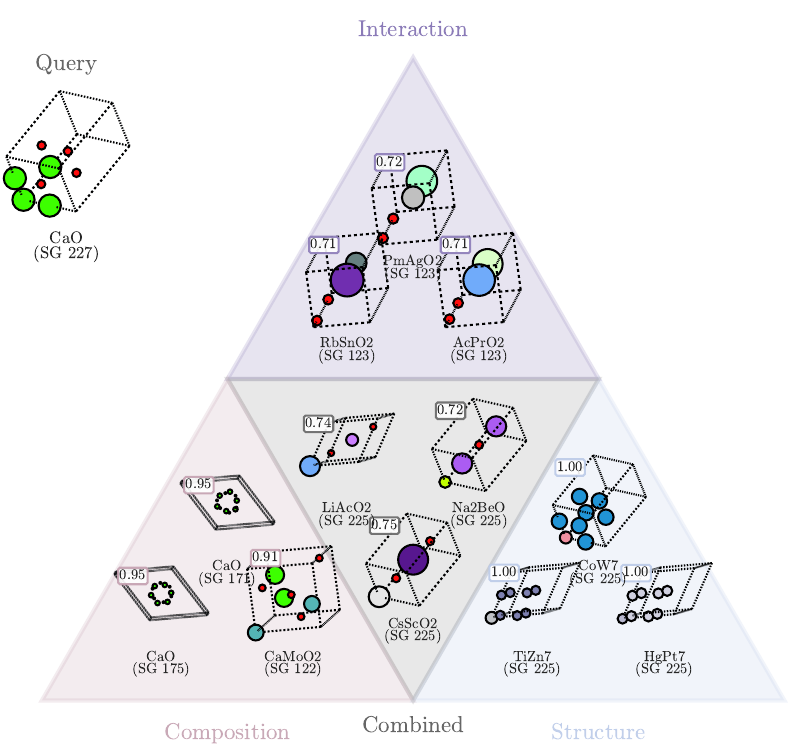}
    \end{minipage}%
    \hfill%
    \begin{minipage}{.48\textwidth}
        \centering
        \includegraphics[width=\linewidth]{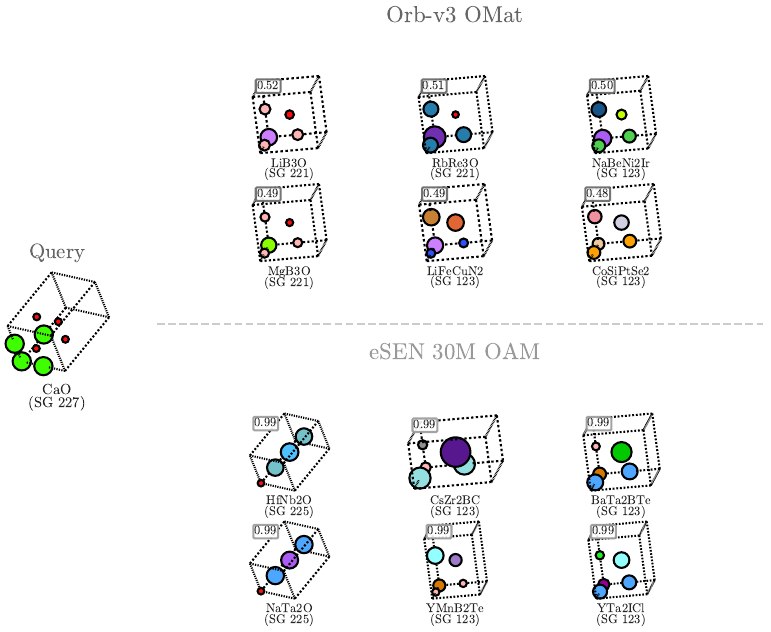}
    \end{minipage}
    \caption{Qualitative nearest-neighbor retrieval for a query: TriForces streams (left) vs. single-stream MLIPs trained on energies and forces (right). For each stream, the cosine similarity between the two embeddings is reported. TriForces yields neighborhoods that separate composition and structure-based similarity, whereas candidates and similarity scores from baselines are not interpretable and qualitatively different from the input.}
    \label{fig:retrieval_qualitative}
\end{figure}

Table~\ref{tab:structure_examples} provides representative structure pairs that illustrate the decomposition. Same-skeleton pairs share geometry but differ in composition, polymorph pairs separate structural from compositional similarity, and StructureMatcher confirms whether geometries are equivalent despite different reported space groups.

\begin{table*}[t]
\centering
\caption{Structure-pair examples demonstrating disentangled embeddings. Columns report cosine similarity from the composition stream (Comp.), structural stream (Struct.), combined embedding, and baseline embedding; SM reports StructureMatcher~\cite{ong2013python} equivalence. TriForces assigns high Struct. to same-skeleton pairs and high Comp. to polymorphs, separating geometry from chemistry.}
\label{tab:structure_examples}
\small
\begin{tabular}{lcccccccc}
\toprule
\textbf{Group} & \textbf{Structure 1} & \textbf{Structure 2} & \textbf{Comp.} & \textbf{Struct.} & \textbf{Combined} & \textbf{Baseline} & \textbf{SM} \\
\midrule

Same Skeleton &
\includegraphics[width=1.8cm]{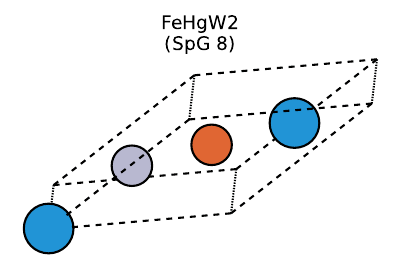} &
\includegraphics[width=1.8cm]{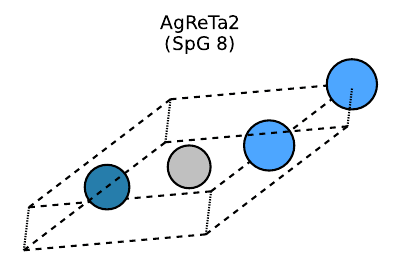} &
0.31 & \textbf{1.00} & 0.74 & 0.48 & No \\[1.2em]

Polymorphs (SM=No) &
\includegraphics[width=1.8cm]{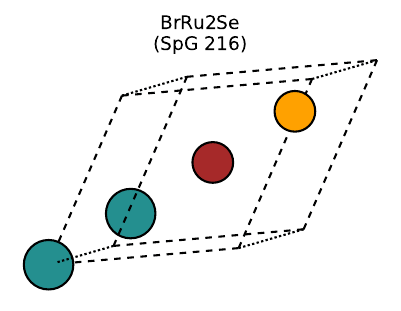} &
\includegraphics[width=1.8cm]{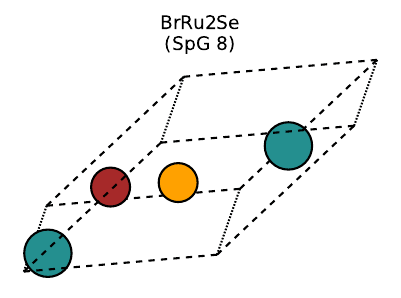} &
\textbf{1.00} & -0.04 & 0.76 & 0.09 & No \\[1.2em]

Polymorphs (SM=Yes) &
\includegraphics[width=1.8cm]{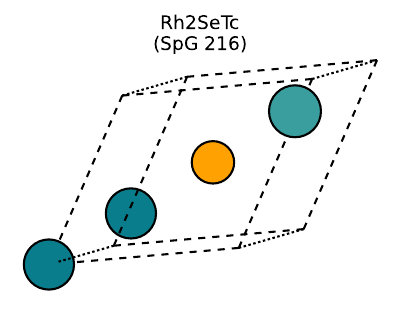} &
\includegraphics[width=1.8cm]{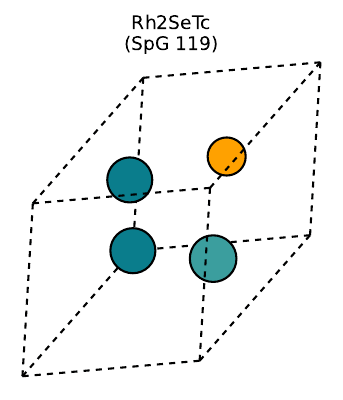} &
\textbf{1.00} & \textbf{0.31} & 0.96 & 0.83 & Yes (0.00\AA) \\[1.2em]

\bottomrule
\end{tabular}
\end{table*}

\FloatBarrier

\section{Training Details}
\label{app:hyperparams}

\subsection{Pipeline Summary and Pseudocode}
\label{app:pipeline_summary}

Table~\ref{tab:pipeline_summary} summarizes the training pipelines used for the main model families and clarifies whether improvements come from random-initialized stream factorization, SSL initialization, or downstream fine-tuning. Algorithm~\ref{alg:ssl_pretraining} summarizes the TriForces forward pass for two augmented views and the SSL objectives applied to the resulting stream embeddings.

\begin{table}[h]
\caption{Summary of training pipelines for the main comparisons.}
\label{tab:pipeline_summary}
\centering
\resizebox{\textwidth}{!}{%
\begin{tabular}{llllll}
\toprule
\textbf{Model/setting} & \textbf{Initialization} & \textbf{Pretraining data} & \textbf{Pretraining objective} & \textbf{Fine-tuning data} & \textbf{Fine-tuning budget} \\
\midrule
Baseline MLIP & Random init. & -- & -- & OMat24 4M & 100k steps / 2 epochs \\
TriForces-Streams & Random init. & -- & -- & OMat24 4M & 100k steps / 2 epochs \\
TriForces & SSL init. & LeMat-Bulk / OMol25 / OMat variants & Denoise + mask + LeJEPA & OMat24 4M & 100k steps / 2 epochs \\
MatBench transfer & SSL or DFT init. & LeMat-Bulk or DFT-pretrained checkpoints & SSL or supervised labels & MatBench folds & 50k steps \\
QM9 transfer & SSL init. & Bulk, OMol25, or both & Denoise + mask + LeJEPA & QM9 split & 50 epochs \\
MatBench Discovery & SSL init. & LeMat-Bulk & Denoise + mask + LeJEPA & OMat24, then MPtrj+sAlex & 400k + 400k steps \\
\bottomrule
\end{tabular}%
}
\end{table}

\begin{algorithm}[h]
\caption{TriForces forward pass and self-supervised pretraining step}
\label{alg:ssl_pretraining}
\begin{algorithmic}[1]
\STATE Initialize the interaction GNN $f_{\mathrm{int}}$, composition encoder $f_{\mathrm{comp}}$, and structure encoder $f_{\mathrm{struct}}$.
\FOR{each input structure $\mathcal{G}=(Z,\mathbf{x},\mathcal{E})$}
    \STATE Sample two stochastic views $\tilde{\mathcal{G}}_1,\tilde{\mathcal{G}}_2$ using atom masking, position noise, graph perturbations, and rotations when applicable.
    \FOR{$v \in \{1,2\}$}
        \STATE Compute interaction features $\mathbf{h}^{\mathrm{int}}_v=f_{\mathrm{int}}(\tilde{Z}_v,\tilde{\mathbf{x}}_v,\tilde{\mathcal{E}}_v)$.
        \STATE Compute composition features $\mathbf{h}^{\mathrm{comp}}_v=f_{\mathrm{comp}}(\tilde{Z}_v)$ from element identities and counts.
        \STATE Compute structure features $\mathbf{h}^{\mathrm{struct}}_v=f_{\mathrm{struct}}(\tilde{\mathbf{x}}_v,\tilde{\mathcal{E}}_v)$ from invariant geometric features.
        \STATE Concatenate the streams, $\mathbf{h}_v=[\mathbf{h}^{\mathrm{int}}_v;\mathbf{h}^{\mathrm{comp}}_v;\mathbf{h}^{\mathrm{struct}}_v]$.
    \ENDFOR
    \STATE Apply denoising (Eq.~\ref{eq:denoise_loss}) and masking (Eq.~\ref{eq:mask_loss}) losses to recover position perturbations and masked atom types.
    \STATE Apply node-level and graph-level LeJEPA losses between $\mathbf{h}_1$ and $\mathbf{h}_2$ (Eq.~\ref{eq:lejepa_terms}).
    \STATE Update all streams with the combined SSL objective in Eq.~\ref{eq:combined_ssl_loss}.
\ENDFOR
\end{algorithmic}
\end{algorithm}

\subsection{Computational Cost}
\label{app:compute}

Figure~\ref{fig:compute_cost} summarizes the computational overhead of adding TriForces streams to different base architectures. We report inference (forward) time, training (forward+backward) time, and peak GPU memory as a function of the number of atoms. TriForces introduces a moderate constant-factor overhead at small system sizes, which decreases with system size as compute becomes dominated by the interaction stream’s message passing. This also clarifies why uniformly scaling the backbone to match TriForces’ total parameter count can be more expensive: additional parameters in the interaction stream directly increase the dominant compute path, whereas TriForces allocates a non-trivial fraction of parameters to auxiliary streams that do not scale as steeply with system size. All timings are measured in \textbf{FP32} on a single NVIDIA A100-80GB GPU; missing points correspond to out-of-memory (OOM) measurements.

\begin{center}
\centering
\includegraphics[width=0.76\textwidth]{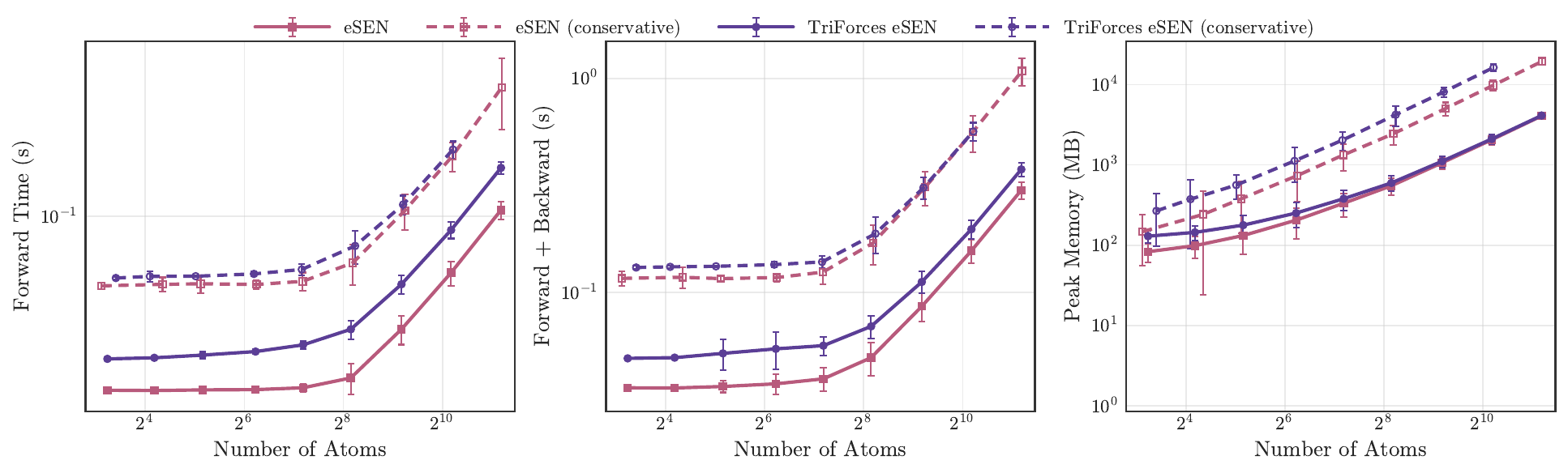}
\vspace{0.15cm}
\includegraphics[width=0.32\textwidth]{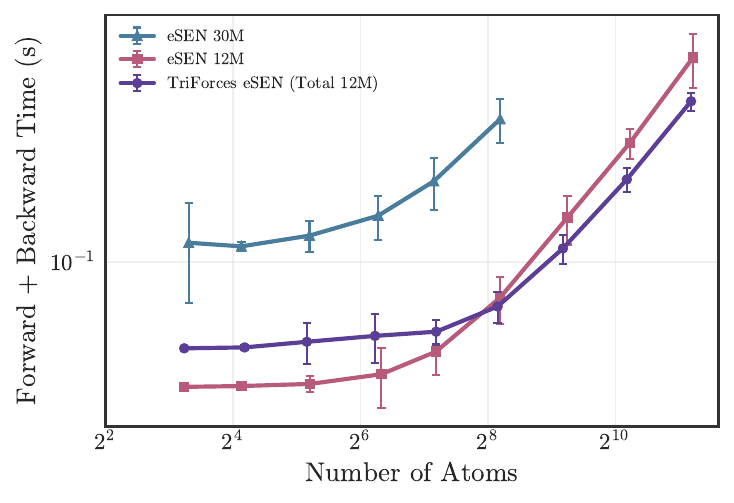}
\captionof{figure}{Computational cost comparisons for TriForces streams. Top: scaling comparison between TriForces eSEN (12M parameters) and the interaction eSEN stream (6M parameters). Bottom: forward + backward time of TriForces eSEN against larger base eSEN models.}
\label{fig:compute_cost}
\label{fig:compute_cost1}
\label{fig:compute_cost2}
\end{center}

\subsection{Pretraining Hyperparameters}
\label{app:hyperparams_pretrain}

\textbf{Training time.}
Pretraining TriForces-eSEN for 300k steps ($\sim$5 epochs) on 1 H100 GPU with batch size 512 in bfloat16 mixed precision takes approximately 40 hours. TriForces-Orb takes $\sim$20 hours on 1 H100 GPU ($\sim$20 GPU-hours). Ablation pretraining runs used 100k steps ($\sim$5 epochs) with the same batch size and precision.
We generate two augmented views per structure for SSL objectives, and pretraining uses no DFT labels.
All pretraining variants use identical hyperparameters; only the dataset changes (LeMat-Bulk, OMol25, or their union). Final models use 300k steps ($\sim$5 epochs); ablations use 100k steps ($\sim$5 epochs). Table~\ref{tab:app_hyperparams_pretrain} lists the shared pretraining hyperparameters.

\begin{table}[h]
\caption{Pretraining hyperparameters (shared across dataset variants).}
\label{tab:app_hyperparams_pretrain}
\centering
\small
\begin{tabular}{lc}
\toprule
\textbf{Hyperparameter} & \textbf{Value} \\
\midrule
\multicolumn{2}{l}{\textit{Training}} \\
Steps & 300,000 \\
Epochs & 5 \\
Effective batch size & 512 \\
Optimizer & AdamW \\
Learning rate & $3 \times 10^{-4}$ \\
LR schedule & Linear warmup + cosine decay \\
Warmup steps & 500 \\
Weight decay & 0.01 \\
Gradient clipping & 10.0 \\
Precision & bfloat16 (mixed) \\
\midrule
\multicolumn{2}{l}{\textit{Loss weights}} \\
$\lambda_{\text{denoise}}$ & 10 \\
$\lambda_{\text{atom}}$ (masking) & 0.1 \\
$\lambda_{\text{LeJEPA}}^{\text{node}}$ & 0.1 \\
$\lambda_{\text{LeJEPA}}^{\text{graph}}$ & 0.1 \\
$\lambda_{\text{SIGReg}}$ & 0.1 \\
\midrule
\multicolumn{2}{l}{\textit{LeJEPA}} \\
Number of augmentations & 2 \\
SIGReg slices & 1024 \\
SIGReg $t_{\max}$ & 3.0 \\
Quadrature points & 17 \\
\midrule
\multicolumn{2}{l}{\textit{Augmentations}} \\
Max augmentations per sample & 3 \\
Noise & Always applied \\
Noise $\sigma$ range & [0.05, 0.3] \\
Masking probability & 0.3 \\
Rotation max angle & $180^\circ$ \\
Unit cell perturbation $\sigma$ & [0.01, 0.03] \\
\midrule
\multicolumn{2}{l}{\textit{Random graph augmentation}} \\
Radius range & [4.0, 6.0] \AA \\
Max neighbors range & [20, 120] \\
\midrule
\multicolumn{2}{l}{\textit{Compositional stream}} \\
Transformer layers & 4 \\
Number of Heads & 8 \\
Hidden dim & 1024 \\
$d_{\text{comp}}$ & 256 \\
\midrule
\multicolumn{2}{l}{\textit{Structural stream}} \\
GNN layers & 3 \\
Hidden dim & 1024 \\
RBF features & 8 \\
$d_{\text{struct}}$ & 256 \\
\bottomrule
\end{tabular}
\end{table}

\subsection{Fine-tuning Hyperparameters}
\label{app:hyperparams_finetune}

\begin{table}[h]
\caption{Fine-tuning hyperparameters for experiments in Table~\ref{tab:main_results} and Figure~\ref{fig:data_efficiency}. These were selected to optimize energy and force performance on a baseline eSEN-conservative model without pretraining.}
\label{tab:app_hyperparams_finetune}
\centering
\small
\begin{tabular}{lc}
\toprule
\textbf{Hyperparameter} & \textbf{Value} \\
\midrule
Dataset & 4M samples from OMat24 \\
Steps & 100,000 \\
Batch size & 72 \\
Optimizer & AdamW \\
Learning rate & $3 \times 10^{-4}$ \\
LR schedule & Linear warmup + cosine decay \\
Warmup steps & 500 \\
Weight decay & $1 \times 10^{-3}$ \\
Gradient clipping & 1.0 \\
$\lambda_E$ (energy) & 50 \\
$\lambda_F$ (forces) & 100 \\
$\lambda_\sigma$ (stress) & 50 \\
Precision & float32 \\
\bottomrule
\end{tabular}
\end{table}

\textbf{OMat24 fine-tuning compute.}
Fine-tuning for Table~\ref{tab:main_results} uses $\sim$24 GPU-hours on a single A100 GPU in float32.

\textbf{Task heads.}
For all targets, we use a direct prediction head with 2 hidden layers of width 128, SiLU activations, and no normalization. For energy-conserving models, we use the same head to predict energy, with forces obtained as its positional gradients.

\textbf{MatBench and QM9.}
For MatBench we fine-tune for 50,000 steps, and for QM9 we fine-tune for 50 epochs.

Models for the Matbench Discovery were trained following in parts the pipeline established in \citet{pmlr-v267-fu25h} without using the direct training warm-up. Table~\ref{tab:app_hyperparams_discovery} shows the different hyperparameters for these models. We used 4 A100 GPUs; total fine-tuning cost was ~1500 GPU-hours for TriForces-eSEN and ~1200 GPU-hours for TriForces-Orb. SSL pre-training consumed ~35 H100 GPU-hours for eSEN and ~30 H100 GPU-hours for Orb (bfloat16 mixed precision).

\begin{table}[h]
\caption{Fine-tuning hyperparameters for Matbench Discovery in Table~\ref{tab:matbench_discovery_main}.}
\label{tab:app_hyperparams_discovery}
\centering
\small
\begin{tabular}{lcc}
\toprule
\textbf{Hyperparameter} & \textbf{OMat24 training phase 1} & \textbf{MPtrj + sAlex phase 2} \\
\midrule
Dataset & OMat24 (~100M) & sAlex + 8 $\times$ MPtrj \\
Steps & 400,000 & 400,000 \\
Batch size (per GPU) & 128 & 64 \\
Effective batch size (4 GPUs) & 512 & 256 \\
Optimizer & AdamW & AdamW \\
Learning rate & $3 \times 10^{-4}$ & $3 \times 10^{-4}$ \\
LR schedule & Linear warmup + cosine decay & Linear warmup + cosine decay \\
Warmup steps & 500 & 500 \\
Weight decay & $1 \times 10^{-3}$ & $1 \times 10^{-3}$ \\
Gradient clipping & 1.0 & 1.0 \\
$\lambda_E$ (energy) & 50 & 100 \\
$\lambda_F$ (forces) & 100 & 100 \\
$\lambda_\sigma$ (stress) & 50 & 50 \\
Precision & float32 & float32 \\
\bottomrule
\end{tabular}
\end{table}

\end{document}